\documentclass[10pt,twocolumn,letterpaper]{article}

\usepackage[T1]{fontenc}
\usepackage[margin=0.75in,columnsep=0.25in]{geometry}
\usepackage[hyphens]{url}
\usepackage{graphicx}
\usepackage{microtype}
\usepackage{mathptmx}
\usepackage{natbib}
\usepackage{caption}
\usepackage{amsmath}
\usepackage{amssymb}
\usepackage{amsthm}
\usepackage{booktabs}
\usepackage{multirow}
\usepackage{array}

\usepackage[colorlinks=true,linkcolor=blue,citecolor=blue,urlcolor=blue]{hyperref}
\hypersetup{pdftitle={Attention Sensitivity Is Not Enough: Dissociating Attention-Level and Behavioural In-Context Learning under Fine-Tuning},pdfauthor={Jinyuan Zhang, Peng He, Yin Yuan, He Hu, ShengShuo Jiao}}
\usepackage[capitalize,noabbrev]{cleveref}

\newcommand{\E}{\mathbb{E}}

\newcommand{\Prob}{\mathbb{P}}

\newcommand{\Loss}{\mathcal{L}}
\newcommand{\eqdef}{\triangleq}
\newcommand{\indic}{\mathbf{1}}
\newcommand{\inner}[2]{\langle #1,\,#2\rangle}

\DeclareMathOperator*{\argmax}{arg\,max}
\DeclareMathOperator{\acc}{acc}

\DeclareMathOperator{\softmax}{softmax}

\DeclareMathOperator{\KL}{KL}

\newcommand{\Dmatch}{\mathcal{D}_{1}}
\newcommand{\Dmismatch}{\mathcal{D}_{2}}
\newcommand{\ICS}{\mathrm{ICS}}
\newcommand{\ICShat}{\widehat{\ICS}}
\newcommand{\ICLgap}{\mathrm{ICL\text{-}GAP}}
\newcommand{\Lfunc}{\Loss_{\mathrm{func\text{-}icl}}}
\newcommand{\Lce}{\Loss_{\mathrm{CE}}}
\newcommand{\Ltot}{\Loss_{\mathrm{total}}}

\newcommand{\armA}{\textsc{F\_A}}
\newcommand{\armM}{\textsc{F\_M}}
\newcommand{\armNone}{\textsc{F\_None}}
\newcommand{\armKL}{\textsc{F\_ICS-Max}}
\newcommand{\armKLone}{\textsc{F\_ICS-Max}\textsubscript{v1}}
\newcommand{\armKLtwo}{\textsc{F\_ICS-Max}\textsubscript{v2}}

\newcommand{\BICS}{\mathrm{B\text{-}ICS}}
\newcommand{\BICSbeta}{\mathrm{B\text{-}ICS}_{\!\beta}}
\newcommand{\Lanchor}{\Loss_{\mathrm{anchor}}}
\newcommand{\Lbics}{\Loss_{\mathrm{B\text{-}ICS}}}
\newcommand{\Lkll}{\Loss_{\mathrm{KL\text{-}logits}}}
\newcommand{\Llwt}{\Loss_{\ell_{2}\text{-}\theta_{0}}}
\newcommand{\Sanchor}{\mathcal{S}_{\mathrm{anchor}}}
\newcommand{\armAnchor}{\textsc{F\_anchor}}
\newcommand{\armBICS}{\textsc{F\_BICS}}
\newcommand{\armKLL}{\textsc{LogitAnchor}}
\newcommand{\armLtwo}{\textsc{WeightAnchor}}

\newcommand{\armAnchorLast}{\textsc{F\_anchor\_last4}}
\newcommand{\armAnchorLM}{\textsc{F\_anchor\_lmhead}}

\graphicspath{{figures/}}
\theoremstyle{plain}
\newtheorem{theorem}{Theorem}
\newtheorem{proposition}[theorem]{Proposition}

\theoremstyle{definition}

\begin{document}

\twocolumn[{%
  \begin{center}
    {\Large\bfseries Attention Sensitivity Is Not Enough:\\
     Dissociating Attention-Level and Behavioural\\
     In-Context Learning under Fine-Tuning\par}
    \vspace{1.3em}
    {\normalsize
    \begin{tabular}{c@{\hspace{4em}}c}
      Jinyuan Zhang & Peng He$^{\ast}$\\
      {\ttfamily 202621116012480@stu.hubu.edu.cn} & {\ttfamily penghe@hubu.edu.cn}\\
      \addlinespace[3pt]
      Yin Yuan & He Hu\\
      {\ttfamily 202521120012766@stu.hubu.edu.cn} & {\ttfamily 202521120012751@stu.hubu.edu.cn}\\
      \addlinespace[3pt]
      ShengShuo Jiao & \\
      {\ttfamily 202621120012764@stu.hubu.edu.cn} & \\
    \end{tabular}\par}
    \vspace{1.1em}
    {\itshape Hubei University, Wuhan, China\par}
    \vspace{0.5em}
    {$^{\ast}$Corresponding author: {\ttfamily penghe@hubu.edu.cn}\par}
    \vspace{1.2em}
    \begin{minipage}{0.92\textwidth}
      \begin{abstract}
In-context learning (ICL) lets large language models adapt to new tasks from demonstrations, and fine-tuning can erode this behaviour. Many preservation diagnostics inspect attention: if attention changes when demonstrations change, the model is treated as context-sensitive. This paper asks how far that proxy can be trusted once it is optimised. We formalise \emph{In-Context Sensitivity} (ICS), the average row distance between last-token attention on matched and mismatched demonstration prefixes, and pair it with \emph{ICL-GAP}, the behavioural accuracy gap between the same prefixes. In a controlled four-arm ablation on Llama-2-7B, an ICS-maximising regulariser ($\armKL$) drives ICS to $1.413$, within $0.5\%$ of its geometric ceiling. The behavioural readout tells a different story: ICL-GAP stays near zero and MMLU accuracy moves from $0.371$ to $0.279$, a Goodhart dissociation of the bounded attention proxy. Endpoint statistics locate the mechanism: attention grows sharp and near-disjoint across prefixes yet routes to formatting and demonstration-body tokens rather than labels. A random-label protocol confirms that the behavioural probe family retains dynamic range at the same checkpoints. In a constructive sweep, behaviour gating partially mitigates the effect, while objectives anchored to pretrained computation hold the high-MMLU, moderate-ICS region that divergence maximisers leave. The main lesson is diagnostic: attention-level ICL proxies earn their place as training targets only after validation against behavioural gaps.

      \end{abstract}
    \end{minipage}
  \end{center}
  \vspace{2.2em}
}]

\section{Introduction}
\label{sec:intro}

Large language models can solve new tasks from demonstrations in their input, a capability known as in-context learning (ICL) \citep{brown2020language,wei2022emergent}. ICL requires no parameter updates, but subsequent fine-tuning can alter the behaviour on which this adaptation relies. Continued pretraining and instruction tuning have been observed to weaken ICL on held-out tasks \citep{luo2023empirical,shi2024continual,wang2023incontextlearning}, motivating methods that measure and limit this drift.

Many diagnostics inspect attention because ICL has been associated with mechanisms such as induction heads and copying circuits \citep{olsson2022context,akyurek2022learning}. A common approach compares attention maps under a matched prefix containing correct demonstrations and a mismatched prefix containing shuffled or random demonstrations. The resulting divergence measures whether attention responds to the demonstrations. It is inexpensive and differentiable, which makes it suitable for monitoring and, potentially, regularisation---though on its own it says nothing about whether that response improves task behaviour.

\begin{figure}[t]
\centering
\includegraphics[width=\linewidth]{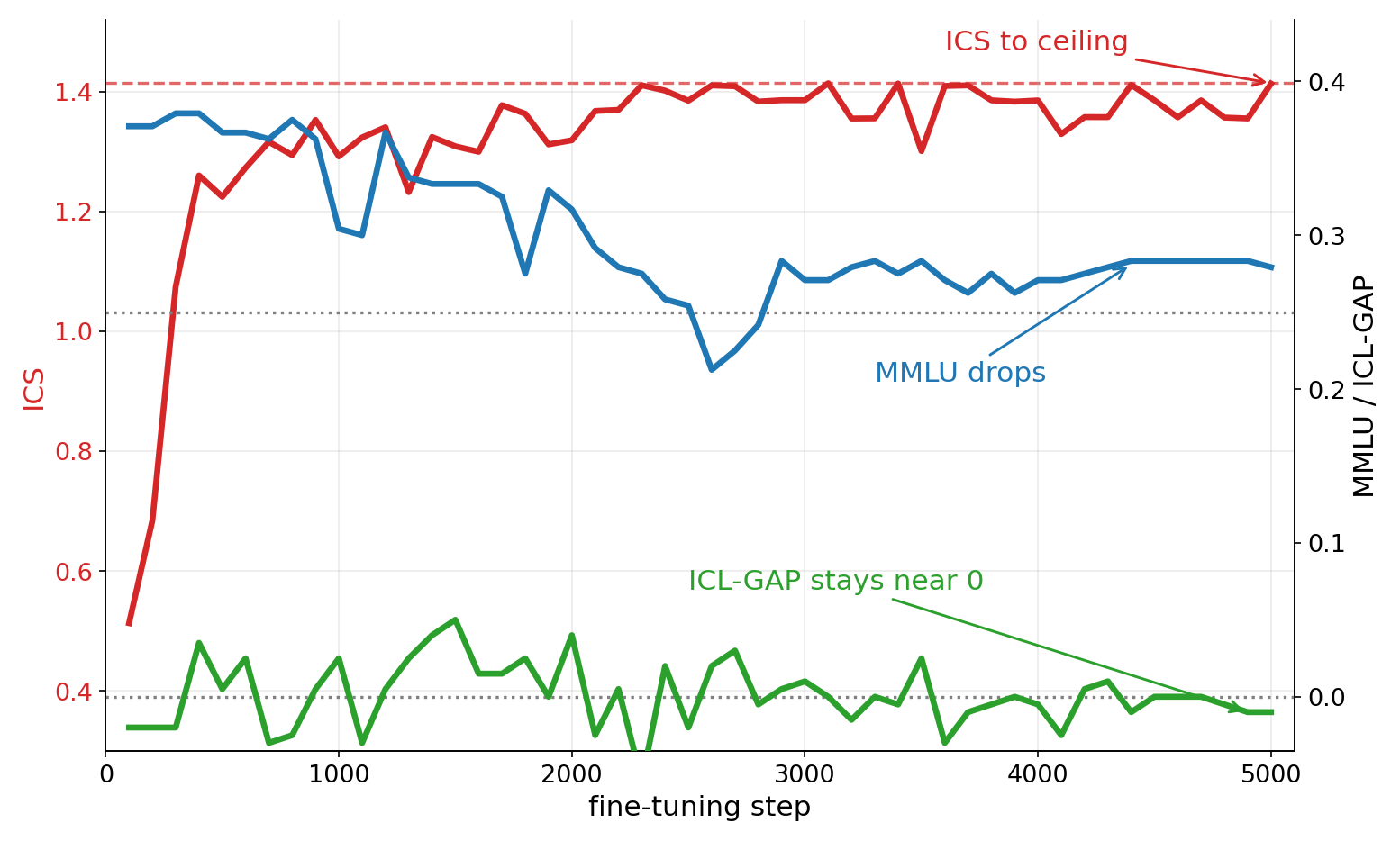}
\caption{Under the ICS-maximising stress test, the attention proxy approaches its ceiling while ICL-GAP stays near zero and MMLU declines. This divergence motivates testing the proxy against behavioural ICL under optimisation.}
\label{fig:motivation}
\end{figure}

We test whether an attention-level ICL proxy remains faithful when it becomes an optimisation target. Attention maps are already used in distillation, alignment, and faithfulness analysis \citep{jiao2020tinybert,tropeano2026dontbreakingllmimpact,yao2026adaptattentiondynamicsalignment}, making differentiable attention probes plausible auxiliary preservation signals. Without attributing the exact ICS objective to prior work, we use a matched-vs.-mismatched attention diagnostic to test the broader assumption that increasing context-sensitive attention preserves behavioural ICL.

\paragraph{Setup and central finding.}
We formalise the proxy as \emph{In-Context Sensitivity} (ICS): the mean distance, over last-token attention rows in a band of mid-network layers, between a matched and a mismatched demonstration prefix. Because each attention row is a probability distribution, ICS has a fixed geometric ceiling, attained only when the two prefixes induce attention concentrated on disjoint single tokens. Its behavioural counterpart is \emph{ICL-GAP}: the held-out task accuracy under the matched prefix minus the held-out accuracy under the mismatched prefix. ICL-GAP is the behavioural quantity any ICL-preservation method actually wants to keep above zero.

On Llama-2-7B we run a four-arm controlled ablation, with all arms starting from the same checkpoint and training for 5,000 steps on a fixed instruction-tuning mixture. The first three arms freeze attention, freeze MLPs, or fine-tune all parameters without an auxiliary loss; the fourth, \armKL{}, adds an ICS-maximising term to the cross-entropy loss. The unregularised arms remain close to the pretrained ICS of 0.516. \armKL{} reaches 1.413, within 0.5 percent of the theoretical ceiling of about 1.414, while ICL-GAP on a held-out QA probe remains near zero and MMLU accuracy falls from 0.371 to 0.279.

\paragraph{Main empirical finding.}
Considered alone, the $2.74\times$ increase in attention divergence could be read as stronger context sensitivity. The behavioural measurement points the other way, and the gap between the two readings is exactly why attention-level ICL diagnostics need validation against a behavioural gap before they are used under optimisation pressure.

\paragraph{Contributions.}
We make four contributions. First, we validate a matched-vs.-mismatched behavioural probe and define ICS as a geometrically bounded attention diagnostic paired with behavioural ICL-GAP. Second, a controlled Llama-2-7B ablation shows that explicit ICS maximisation saturates the proxy while leaving ICL-GAP near zero and degrading MMLU. Third, attention endpoint statistics and a structural construction show how sharp, disjoint routing can maximise ICS without selecting answer-relevant tokens. Finally, we test behaviour-gated B-ICS and pretrained anchoring as constructive responses to this dissociation. The supplement gives the contamination postmortem, full trajectories, layer ablations, and constructive sweeps.

\section{Related Work}
\label{sec:related}

\paragraph{What ICL is, and where it lives.}
In-context learning emerged as an empirical property of large transformers \citep{brown2020language,wei2022emergent}, and mechanistic work has since tied it to specific circuitry: induction heads that copy an earlier pattern's continuation forward \citep{olsson2022context}, and attention layers that implement gradient-descent-like or Bayesian updates over the prefix \citep{akyurek2022learning,vonoswald2023transformers,xie2022explanation}. Recent probes cut the other way: in tabular and structured-data settings, apparent ICL can reduce to memorisation or label priors \citep{capano2026probingmemorizationtabularincontext,pelusi2026categoricalpriorlockinincontext}.
% TODO-cite: task-vector / function-vector accounts of ICL (Hendel et al. 2023; Todd et al. 2024), which localise ICL to compact internal representations beyond induction heads.
These accounts disagree about what ICL is, but share one implication: intact ICL should respond to demonstrations in attention. We do not contest that direction; we put the converse under optimisation pressure---that context-sensitive attention implies intact ICL---once the response becomes a training signal.

\paragraph{ICL under fine-tuning.}
Continued training moves this behaviour, usually tracked at the output. Instruction tuning, supervised fine-tuning, and continual pretraining can all reduce held-out ICL accuracy \citep{luo2023empirical,shi2024continual,wang2023incontextlearning}, and finer-grained work follows how fine-tuning reshapes in-context factual recall \citep{huang2026finetuningdynamicsincontextfactual}.
% TODO-cite: instruction-tuning work reporting improved few-shot ICL (Wei et al. 2022 FLAN; Sanh et al. 2022 T0; Chung et al. 2022), so the erosion claim is framed as one direction of a mixed literature.
The measurement is behavioural throughout: ICL accuracy, or accuracy gaps between matched and mismatched prefixes. Our ICL-GAP belongs to that family; we depart by instrumenting an attention-level diagnostic, ICS, alongside it and studying the relationship between the two under training.

\paragraph{Attention as evidence.}
Attention maps are tempting evidence: cheap, differentiable, already load-bearing. Distillation transfers them between models \citep{jiao2020tinybert}; pruning attention layers changes explanation faithfulness and confidence \citep{tropeano2026dontbreakingllmimpact}; alignment work optimises attention dynamics as a preference signal \citep{yao2026adaptattentiondynamicsalignment}. Yet attention weights need not track the computation behind a prediction.
% TODO-cite: attention-as-explanation debate (Jain and Wallace 2019; Wiegreffe and Pinter 2019; Serrano and Smith 2019)---the canonical critique that attention weights are not, on their own, explanations.
Our stress test sharpens that caution: not whether attention explains a fixed model, but whether an attention proxy stays meaningful while being optimised.

\paragraph{Component-level attribution.}
A parallel line localises capabilities to subnetworks. Mechanistic interpretability isolates circuits at the level of heads and MLP neurons \citep{elhage2021mathematical,wang2023interpretability}; parameter-efficient tuning trains only attention or only MLP blocks \citep{houlsby2019parameter,he2022unifiedview}. Our $\armA$ and $\armM$ arms repurpose the second design for a stability question: which subnetwork's update carries ICL drift? In our setup neither does alone---both preserve ICS near its pretrained value---motivating an explicit proxy regulariser instead of a structural constraint.

\paragraph{Auxiliary objectives under Goodhart pressure.}
Regularisers that protect old behaviour assume the auxiliary signal tracks the capability. EWC \citep{kirkpatrick2017overcoming}, synaptic intelligence \citep{zenke2017continual}, and KL-to-base penalties \citep{ouyang2022training} anchor to different quantities, but each substitutes a measurable surrogate for the behaviour itself. Goodhart's law \citep{goodhart1984problems,manheim2018categorizing} names the failure mode; the RLHF literature gives modern instances, from reward-model over-optimisation \citep{skalse2022defining,gao2023scaling} to reward hacking at inference time and beyond \citep{khalaf2025inferencetimerewardhackinglarge,fu2026rewardshapingmitigatereward,liu2026largelanguagemodelshack}.
% TODO-cite: specification-gaming / Goodhart-in-ML surveys (e.g., Krakovna et al. 2020), which catalogue this failure across learning systems.
Those warnings concern reward functions; we give the same failure an attention-level instance, driving ICS within $0.5\%$ of its geometric ceiling while behavioural ICL stays near zero.

\paragraph{Calibration as a side signal.}
Confidence can part ways with behaviour as well. Fine-tuning shifts calibration \citep{guo2017calibration,desai2020calibration}; we record ECE \citep{naeini2015obtaining} on MMLU \citep{hendrycks2020measuring} as a diagnostic only, since lower ECE under near-chance predictions says little about reasoning quality.

\paragraph{Where this work sits.}
ICL-preservation methods fall into three families: freeze a parameter subset, replay ICL-format data, or regularise toward base-model outputs.
% TODO-cite: replay-based continual learning (e.g., Rolnick et al. 2019) for the second family.
Our $\armA$, $\armM$, and $\armNone$ arms cover the first family in controlled form and our logit-anchoring baseline the third; replay is discussed but not completed in the main comparison. AnchorTune is closest to attention distillation \citep{jiao2020tinybert,jiang2024minillm}, but anchors the trainee to its own pretrained checkpoint $\theta_0$ on last-token attention rows over a fixed ICL probe. We present it as one member of an anchored family whose anchor matches the diagnostic under test, not as a uniquely superior method; B-ICS adds the complement, gating the proxy by behaviour rather than tying it to a checkpoint.

\section{Methodology}
\label{sec:method}

This section defines the diagnostic framework used in the rest of the paper. The goal is deliberately narrow: separate an attention-level sign of context sensitivity from the behavioural effect that an ICL-preservation method is meant to preserve. We first define the matched-vs.-mismatched problem setting and its diagnostic metrics, then introduce the proxy-maximisation stress test and two constructive variants. A separate random-label protocol validates the behavioural measurement before the main stress-test results.

\subsection{Problem Setting}
\label{subsec:problem-setting}

Let $\mathcal{M}_\theta$ be a pretrained autoregressive transformer. For a labelled task $(x,y)\sim p(x,y)$ with label set $\mathcal{Y}$, draw $k=4$ demonstrations $S=\{(x_i,y_i)\}_{i=1}^{k}$ and a random label permutation $\pi$. For the same query $x$, we compare a matched prefix $\Dmatch(x;S)$ with a mismatched prefix $\Dmismatch(x;S,\pi)$ that preserves the token template but breaks the input--label correspondence. A model with behaviourally intact ICL should satisfy
\begin{equation}
\acc_{\Dmatch}(\theta)>\acc_{\Dmismatch}(\theta).
\label{eq:icl-intact}
\end{equation}
This matched-vs.-mismatched contrast is the common protocol behind both the attention diagnostic and the behavioural probe.

\subsection{Diagnostic Metrics}
\label{subsec:diagnostic-metrics}

For a prefix $\mathrm{D}\in\{\Dmatch,\Dmismatch\}$, layer $l$, and head $h$, write the last-token attention row as
\begin{equation}
a_{l,h}(\mathrm{D};\theta)
\eqdef
\softmax\!\left(
\frac{q_{l,h}(\mathrm{D})K_{l,h}(\mathrm{D})^{\top}}{\sqrt{d_h}}+m
\right).
\label{eq:attn-row}
\end{equation}
We define \emph{In-Context Sensitivity} (ICS) by averaging the distance between the matched and mismatched rows over a probe distribution $\mu$ and a layer band $\mathcal{L}$:
\begin{equation}
\ICS(\theta)
\eqdef
\E_{\mu}\!\left[
\frac{1}{|\mathcal{L}|H}
\sum_{l\in\mathcal{L}}\sum_{h=1}^{H}
d_{l,h}(x,S,\pi;\theta)
\right],
\label{eq:ics}
\end{equation}
where
\begin{equation}
d_{l,h}(x,S,\pi;\theta)
\eqdef
\bigl\|a_{l,h}(\Dmatch;\theta)
-a_{l,h}(\Dmismatch;\theta)\bigr\|_{2}.
\label{eq:ics-row-distance}
\end{equation}
For Llama-2-7B we use $\mathcal{L}=\{8,\dots,16\}$; the supplement ablates this choice.

\begin{proposition}[Geometric ceiling]
\label{prop:ceiling}
For any $\theta$, $\ICS(\theta)\in[0,\sqrt{2}]$. For any attention rows $p,q$,
\begin{equation}
\|p-q\|_2^2
=
\|p\|_2^2+\|q\|_2^2-2\inner{p}{q}
\le 2,
\label{eq:row-bound}
\end{equation}
with equality iff $p$ and $q$ are one-hot on disjoint coordinates. Thus $\ICS=\sqrt{2}$ requires every probed matched/mismatched attention-row pair to be one-hot and disjoint almost surely.
\end{proposition}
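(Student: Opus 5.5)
The plan is to prove the pointwise row bound \eqref{eq:row-bound} first, then push it through the averaging in \eqref{eq:ics}, and finally read off the equality case. Fix two attention rows $p,q$; by \eqref{eq:attn-row} each is a softmax output with causal mask $m$. So $p,q$ lie in the probability simplex: nonnegative entries summing to one. If the matched and mismatched prefixes have different lengths, I pad the shorter row with zeros on a common index set. This keeps both rows in a common simplex and leaves all inner products unchanged.

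\textbf{Step 1 (pointwise bound).} I expand $\|p-q\|_2^2=\|p\|_2^2+\|q\|_2^2-2\inner{p}{q}$ and bound each term separately. Since $0\le p_i\le 1$, we have $p_i^2\le p_i$, hence $\|p\|_2^2\le\sum_i p_i=1$, and likewise $\|q\|_2^2\le 1$. Nonnegativity of the entries gives $\inner{p}{q}\ge 0$. Combining these yields $\|p-q\|_2^2\le 2$.

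\textbf{Step 2 (equality case).} Equality forces all three bounds to be tight. First, $\|p\|_2^2=1$ requires $p_i^2=p_i$ for every $i$, so each $p_i\in\{0,1\}$; with unit sum, $p$ is one-hot. The same holds for $q$. Then $\inner{p}{q}=0$ for one-hot vectors means their supports differ. The converse is a direct check.

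One caveat concerns strict softmax outputs: these have full support on unmasked positions, so one-hotness is attained only in the limit. I will therefore phrase the equality statement for rows in the closed simplex. This makes the ``requires'' clause a statement about the supremum being approached.

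\textbf{Step 3 (lifting to ICS).} Each $d_{l,h}\in[0,\sqrt2]$ by Step 1. The average over layers and heads in \eqref{eq:ics} is a convex combination, and taking expectation under $\mu$ preserves the interval, so $\ICS(\theta)\in[0,\sqrt2]$.

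For the equality claim, set $Z=\frac{1}{|\mathcal{L}|H}\sum_{l,h}(\sqrt2-d_{l,h})\ge 0$. If $\ICS=\sqrt2$, then $\E_\mu[Z]=0$, so $Z=0$ $\mu$-almost surely. Since every summand is nonnegative, each $d_{l,h}=\sqrt2$ almost surely. Step 2 then gives one-hot, disjoint row pairs for every $(l,h)$, almost surely.

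The main obstacle is Step 3's equality clause. It needs $d_{l,h}$ to be measurable and bounded, so that expectations are defined and ``zero mean of a nonnegative variable implies zero a.s.'' applies; boundedness comes from Step 1. It also needs the strict-softmax caveat from Step 2, where equality is attained only in the closure of the set of softmax outputs.
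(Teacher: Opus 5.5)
Your proposal is correct and follows the paper's argument, which is exactly the expansion in \eqref{eq:row-bound} combined with $\|p\|_2^2,\|q\|_2^2\le 1$ and $\inner{p}{q}\ge 0$ on the probability simplex. Your additions make explicit what the paper leaves implicit: the term-by-term equality analysis, the lift through the convex average with the nonnegative-variable argument for the almost-sure clause, and the remark that strict softmax rows only approach the ceiling, so equality holds only in the closed simplex.
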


ICS measures whether attention reorganises when demonstrations change. The behavioural target is the accuracy gap induced by the same intervention. Let $\hat y_\theta(\mathrm{D})=\argmax_{c\in\mathcal{Y}}\Prob_\theta(c\mid\mathrm{D})$. We define
\begin{align}
\acc_{\mathrm{D}}(\theta)
&\eqdef
\E_{\mu}\!\left[\indic\{\hat y_\theta(\mathrm{D})=y\}\right],
\label{eq:prefix-acc}\\
\ICLgap(\theta)
&\eqdef
\acc_{\Dmatch}(\theta)-\acc_{\Dmismatch}(\theta).
\label{eq:gap}
\end{align}
The central question is whether high ICS implies positive ICL-GAP under optimisation pressure. We also report MMLU accuracy and Expected Calibration Error (ECE):
\begin{equation}
\mathrm{ECE}
\eqdef
\sum_{b=1}^{B}\frac{|B_b|}{N}\,
\bigl|\acc(B_b)-\mathrm{conf}(B_b)\bigr|.
\label{eq:ece}
\end{equation}

\subsection{Proxy-Maximisation Stress Test}
\label{subsec:stress-test}

\begin{figure*}[t]
\centering
\includegraphics[width=0.95\textwidth]{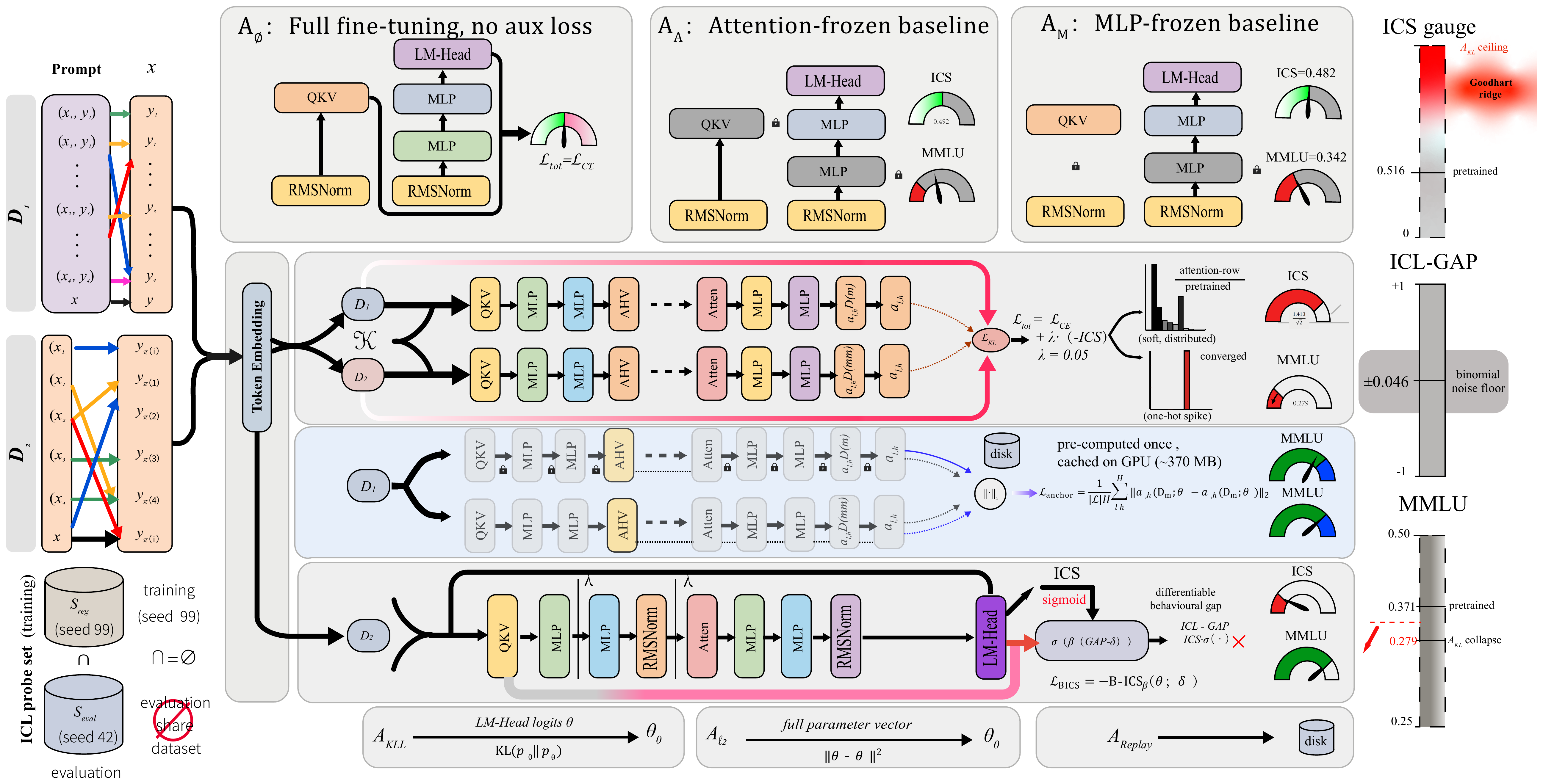}
\caption{Diagnostic framework. Matched and mismatched prompts feed the same fine-tuning pipeline, while the attention-level ICS and behavioural ICL-GAP probes are computed on a held-out pair set disjoint from any regulariser pairs. The architecture also shows the two constructive directions studied later: behaviour-gated diagnostics and anchoring to the pretrained computation.}
\label{fig:architecture}
\end{figure*}

To test the proxy rather than merely observe it, we add an ICS-maximising auxiliary objective to the ordinary cross-entropy loss. Let
\begin{equation}
\mathcal{S}_{\mathrm{reg}}
=
\{(x^{(n)},S^{(n)},\pi^{(n)})\}_{n=1}^{N_{\mathrm{reg}}}
\label{eq:reg-set}
\end{equation}
be a held-out regulariser set, and let $\ICShat_{\mathcal{S}_{\mathrm{reg}}}$ be the empirical estimate of \cref{eq:ics} on that set. The stress-test arm minimises
\begin{align}
\Ltot(\theta)
&\eqdef \Lce(\theta)+\lambda\,\Lfunc(\theta),
\label{eq:total-loss}\\
\Lfunc(\theta)
&\eqdef -\,\ICShat_{\mathcal{S}_{\mathrm{reg}}}(\theta),
\label{eq:fkl-loss}
\end{align}
with $\lambda=0.05$. If ICS is a faithful proxy for ICL, this arm should improve the matched-vs.-mismatched behavioural gap; any proxy gain without a matching behavioural gain then measures the proxy's Goodhart exposure directly.

\subsection{Constructive Variants}
\label{subsec:constructive-variants}

We evaluate two compact variants aimed at mitigating this dissociation. B-ICS gates the attention proxy by a differentiable behavioural sentinel:
\begin{equation}
\BICSbeta(\theta;\delta)
\eqdef
\ICS(\theta)\,
\sigma\!\bigl(\beta(\widetilde{\ICLgap}(\theta)-\delta)\bigr),
\label{eq:bics-smooth}
\end{equation}
where
\begin{equation}
\widetilde{\ICLgap}(\theta)
\eqdef
\E_\mu\!\left[
\log\Prob_\theta(y\mid\Dmatch)
-\log\Prob_\theta(y\mid\Dmismatch)
\right].
\label{eq:soft-gap}
\end{equation}
The soft gap serves as a differentiable surrogate for the discrete ICL-GAP, without any claim that log-probability gaps and accuracy gaps agree pointwise.

AnchorTune instead anchors the trainee's matched-prefix attention rows to the pretrained model $\theta_0$ on $\Sanchor$:
\begin{equation}
\Lanchor
\eqdef
\E_{\Sanchor,l,h}\!\left[
\bigl\|a_{l,h}(\Dmatch;\theta)
-a_{l,h}(\Dmatch;\theta_0)\bigr\|_2
\right],
\label{eq:l-anchor}
\end{equation}
with objective
\begin{equation}
\Ltot(\theta)
\eqdef
\Lce(\theta)+\lambda\,\Lanchor(\theta;\theta_0,\Sanchor).
\label{eq:anchortune-objective}
\end{equation}
Unlike a divergence-maximising loss, this objective is minimised at the pretrained attention pattern rather than on the disjoint-support ridge of Proposition~\ref{prop:ceiling}.

\section{Experiments and Analysis}
\label{sec:experiments}

We organise the empirical study around four research questions. They progress from validating the behavioural measurement, through establishing and explaining the proxy--behaviour dissociation, to testing constructive responses:
\begin{itemize}
    \item \textbf{RQ0.} Does the pretrained model exhibit behavioural ICL under a controlled matched-vs.-mismatched protocol?
    \item \textbf{RQ1.} Can an attention-level ICL proxy be driven to its geometric ceiling while behavioural ICL remains absent?
    \item \textbf{RQ2.} What changes inside the model when the proxy is maximised, and why does this fail to improve behaviour?
    \item \textbf{RQ3.} Can behavioural gating or pretrained anchoring reduce this proxy Goodhart failure?
\end{itemize}
After a shared experimental setup, the following modules answer these questions in order. Each module presents question-specific evidence and closes with a direct answer.

\subsection{Shared Training and Evaluation Setup}
\label{subsec:experimental-protocol}

All main arms start from the same Llama-2-7B pretrained checkpoint \citep{touvron2023llama} and run for $5{,}000$ optimiser steps on the same instruction-tuning mixture. The arms share the optimiser, schedule, sequence length, batch size, precision, and evaluation cadence; exact training details are in the supplement. Probes are evaluated every $100$ steps, and the supplement reports full trajectories.

The controlled ablation compares four arms. $\armA$ freezes attention projections and updates the MLP, layer norms, and language-model head. $\armM$ freezes the MLP blocks and updates attention projections, layer norms, and the head. $\armNone$ full-finetunes all parameters with cross-entropy only. $\armKL$ full-finetunes all parameters with the ICS-maximising stress-test loss in \cref{eq:fkl-loss}.

To avoid probe contamination, the regulariser and evaluation pair sets are disjoint:
\begin{equation}
\mathcal{S}_{\mathrm{reg}}\cap\mathcal{S}_{\mathrm{eval}}=\emptyset.
\label{eq:disjoint-probes}
\end{equation}
The regulariser set has $500$ pairs sampled with seed $99$; the evaluation set has $500$ pairs sampled with seed $42$. ICS and ICL-GAP are always reported on $\mathcal{S}_{\mathrm{eval}}$. MMLU uses a fixed $240$-item subset stratified over $57$ subjects, evaluated under a $5$-shot prefix at temperature $0$. ECE uses the MMLU predictions with $15$ equal-mass bins. GSM8K exact-match, which sits at zero for the base model across arms, is excluded from the main evidence.

\subsection{Does the Behavioural Probe Detect ICL? (RQ0)}
\label{subsec:s0}

A proxy-stress test is only meaningful if the base model can use demonstrations in at least one held-out protocol. We therefore screened pretrained Llama-2-7B on $480$ random-label binary classification episodes. Each episode samples a semantic split, assigns the two classes arbitrary labels, and compares three prompts:
\begin{equation}
\mathrm{D}\in\{\Dmatch,\,\Dmismatch,\,\mathrm{D}_{\emptyset}\}.
\label{eq:s0-prompts}
\end{equation}
The model scores only the two candidate label tokens. Pretrained accuracy is $0.669$ with matched demonstrations, $0.363$ with swapped demonstrations, and $0.492$ without demonstrations. The paired bootstrap estimate is
\begin{align}
\acc_{\Dmatch}-\acc_{\Dmismatch}
&=0.306,
\label{eq:s0-gap}\\
95\%~\mathrm{CI}
&=[0.231,0.379].
\label{eq:s0-ci}
\end{align}
Thus the checkpoint has a clear behavioural ICL effect on this sanity-check protocol. We keep this result separate from the controlled QA probe, which is harder and has a near-zero behavioural gap across arms.

To check that this behavioural measurement family still has dynamic range after fine-tuning, we reran the same RQ0 protocol on the step-$5{,}000$ checkpoints of all four arms. \Cref{tab:rq0-all-arms} shows that every arm retains a positive matched-vs.-permuted gap with a bootstrap confidence interval excluding zero. This does not rescue the controlled QA gap; rather, it rules out the simpler objection that all behavioural probes are insensitive at these checkpoints. In particular, $\armKL$ reaches the largest RQ0 gap while leaving the controlled QA ICL-GAP near zero.

\begin{table}[t]
\centering
\footnotesize
\setlength{\tabcolsep}{2.5pt}
\caption{RQ0 random-label A/B protocol at step $5{,}000$. All arms retain a clear matched-vs.-permuted behavioural gap on this protocol, so the near-zero controlled QA ICL-GAP cannot be attributed to a universally insensitive behavioural probe family.}
\label{tab:rq0-all-arms}
\begin{tabular}{lcccc}
\toprule
\textbf{Arm} & \textbf{Match} & \textbf{Perm.} & \textbf{None} & \textbf{Gap [95\% CI]} \\
\midrule
\armA & 0.663 & 0.360 & 0.492 & 0.302 [0.235, 0.371] \\
\armM & 0.656 & 0.381 & 0.492 & 0.275 [0.206, 0.346] \\
\armNone & 0.623 & 0.435 & 0.492 & 0.188 [0.115, 0.260] \\
\armKL & 0.692 & 0.340 & 0.492 & 0.352 [0.292, 0.413] \\
\bottomrule
\end{tabular}
\setlength{\tabcolsep}{6pt}
\end{table}

\paragraph{Answer to RQ0.}
Yes. The pretrained checkpoint shows a clear matched-vs.-mismatched behavioural effect, and the same probe family retains dynamic range after fine-tuning. This validation does not imply that the harder controlled QA probe must be positive; it establishes that its near-zero gap is not caused by universal probe insensitivity.

\subsection{Can ICS Saturate without Behavioural ICL? (RQ1)}
\label{subsec:rq1}

Before the controlled ablation, we ran a coarse full-fine-tuning baseline under an earlier data mixture and a higher learning rate. Starting from $\ICS=0.516$, that run collapsed to $\ICS\approx0.20$ at step $5{,}000$, a relative reduction of $61\%$. It serves only as motivation---an attention-proxy analogue of ICL fragility \citep{luo2023empirical} rather than a behavioural-collapse claim.

\Cref{tab:main} reports the step-$5{,}000$ values of ICS, ICL-GAP, MMLU accuracy, and ECE for the controlled ablation, with the pretrained baseline in the first row. ICL-GAP was logged during training for $\armKL$ and later evaluated post-hoc for the unregularised arms.

\begin{table*}[t]
\centering
\caption{Final controlled-ablation results at step $5{,}000$. ICS is the attention-level diagnostic from \cref{eq:ics}, bounded above by $\sqrt{2}\approx 1.414$. ICL-GAP is the behavioural quantity from \cref{eq:gap}. MMLU accuracy is on a $240$-item probe (chance $0.25$). The pretrained baseline ICS is $0.516$.}
\label{tab:main}
\begin{tabular}{lcccc}
\toprule
\textbf{Arm} & \textbf{ICS} ($\uparrow$) & \textbf{ICL-GAP} ($\uparrow$) & \textbf{MMLU acc.} ($\uparrow$) & \textbf{ECE} ($\downarrow$) \\
\midrule
Pretrained Llama-2-7B & 0.516 & --- & --- & --- \\
\armA & 0.492 & --- & 0.338 & 0.434 \\
\armM & 0.482 & --- & 0.342 & 0.406 \\
\armNone & 0.500 & --- & 0.375 & 0.407 \\
\armKL & \textbf{1.413} & $-0.010$ & 0.279 & 0.231 \\
\bottomrule
\end{tabular}
\end{table*}

\begin{figure}[t]
\centering
\includegraphics[width=\linewidth]{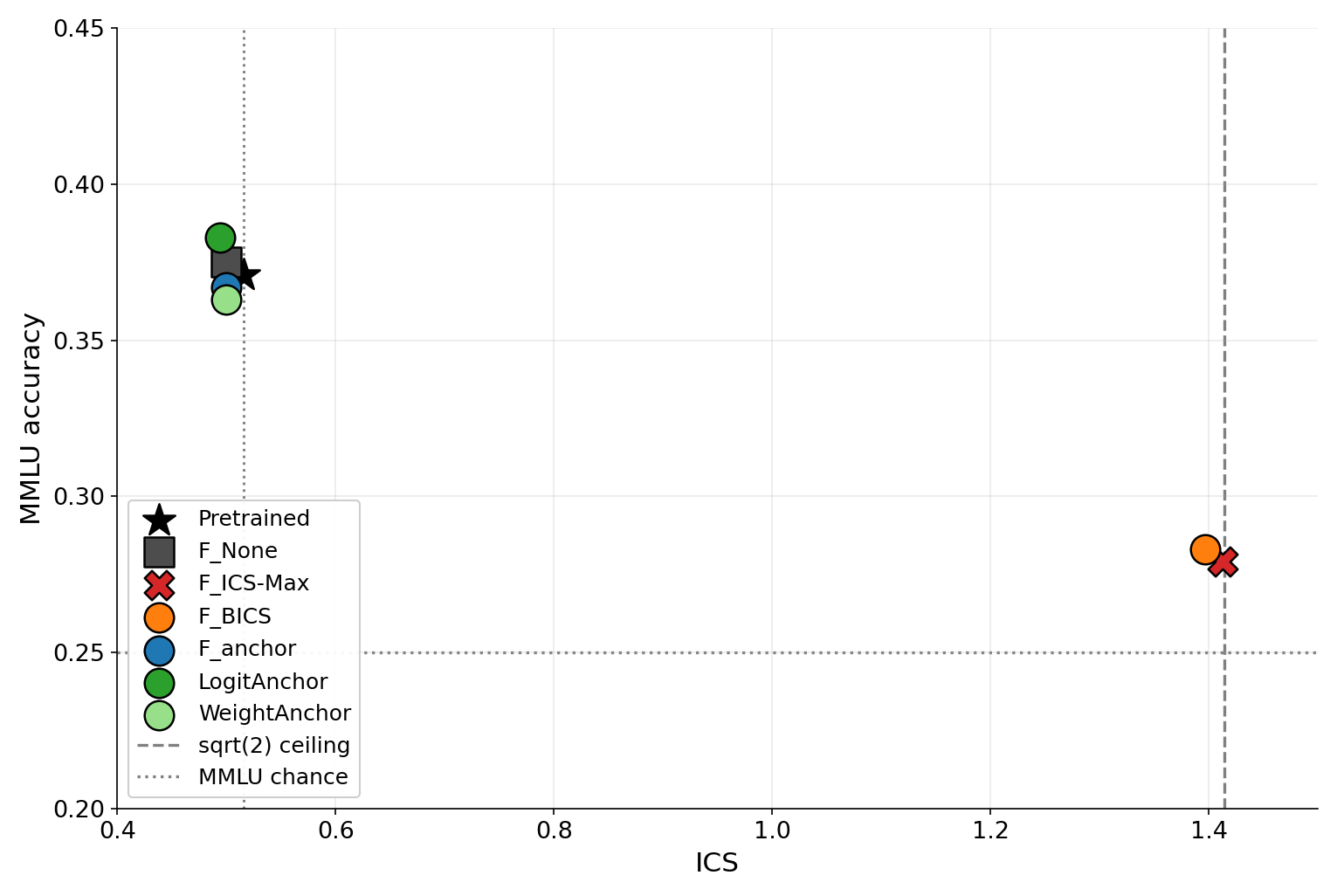}
\caption{Step-$5{,}000$ outcome in the (ICS, MMLU) plane. $\armKL$ saturates the attention proxy near the $\sqrt{2}$ ceiling but moves into the low-MMLU region; anchored variants remain near the pretrained ICS while preserving MMLU.}
\label{fig:proxy-performance}
\end{figure}

\armA, \armM, and \armNone{} all finish within $0.04$ of the pretrained baseline ICS of $0.516$; the spread across these three arms is itself only $0.018$. Under the controlled stress-test mixture and hyperparameters, neither restricting updates to MLP layers ($\armA$) nor to attention layers ($\armM$) nor leaving them unconstrained ($\armNone$) substantially alters attention-level context responsiveness. We expected $\armM$, which can update attention, to drift further from baseline than $\armA$, but the observed gap ($0.482$ vs.\ $0.492$) is at the noise level of the three-arm spread and supports no ordering between the two. Nor does $\armNone$ reproduce the coarse-pilot collapse, finishing at $\ICS = 0.500$. Since that pilot used different data and hyperparameters, its endpoint is not quantitatively comparable to the controlled experiment, and we leave a controlled unregularised-collapse condition to future work.

Because the MMLU probe has only $240$ items, we avoid interpreting small differences among the unregularised arms: at $p\approx0.37$ the binomial standard error is about $0.031$, which puts differences of order $0.03$ within sampling noise at the current run count. The large $\armKL$ drop from $0.371$ to $0.279$ is the general-ability signal we use.

$\armKL$ moves ICS from $0.513$ at step $100$, which is statistically indistinguishable from the pretrained baseline of $0.516$, to $1.413$ at step $5{,}000$. The convergence value sits within $0.5\%$ of the geometric ceiling $\sqrt{2}\approx 1.4142$ from Proposition~\ref{prop:ceiling}: the attention divergence between matched and mismatched demonstrations has been amplified by a factor of $1.413/0.516\approx 2.74\times$ relative to the pretrained model. Read in isolation, this looks like a successful ICL-preservation regulariser.

The behavioural picture is the opposite. ICL-GAP starts at $-0.020$ at step $100$ ($\acc_{\Dmatch}=0.295$ vs.\ $\acc_{\Dmismatch}=0.315$), briefly becomes positive in the middle of training (peak $0.050$ at step $1{,}500$), and ends at $-0.010$ at step $5{,}000$. Across the $50$ logged probe points, ICL-GAP has sample mean $\bar g=0.005$ and sample standard deviation $s_g=0.020$. The trajectory evaluation used $200$ paired examples per checkpoint, although the fixed evaluation set contains $500$ pairs; at per-condition accuracy $p\approx 0.30$, the unpaired binomial-difference scale is
\begin{equation}
\sqrt{\frac{2p(1-p)}{200}}
\approx 0.046.
\label{eq:gap-noise-floor}
\end{equation}
This is a conservative scale for individual logged checkpoints because the actual matched/mismatched evaluations are paired. The trajectory mean $\bar g$ is about one ninth of this scale and within $0.25 s_g$ of zero: the model has not learned to use correct demonstrations more than incorrect ones at any point, despite the attention divergence multiplying.

\paragraph{Answer to RQ1.}
Yes. Optimisation drives ICS to $1.413$, essentially its geometric ceiling, while the controlled QA ICL-GAP remains statistically and practically near zero and MMLU degrades. Proxy saturation and behavioural preservation therefore come apart under optimisation pressure.

\subsection{How Does Proxy Maximisation Come Apart from Behaviour? (RQ2)}
\label{subsec:rq2}

The full trajectory shows a monotone rise of ICS toward the $\sqrt{2}$ ceiling, an approximately monotone MMLU drop once ICS passes $\approx 1.3$, and ICL-GAP fluctuating around zero throughout. MMLU accuracy under $\armKL$ falls from $0.371$ at step $100$ to $0.279$ at step $5{,}000$, a drop well beyond the sampling scale above and within $0.03$ of the random-chance baseline of $0.25$. ECE is lower for $\armKL$ ($0.231$) than for the unregularised arms ($0.41$--$0.43$), but this reflects the model growing more uncertain as accuracy approaches chance rather than improved calibration.

\begin{figure*}[t]
\centering
\includegraphics[width=0.88\textwidth]{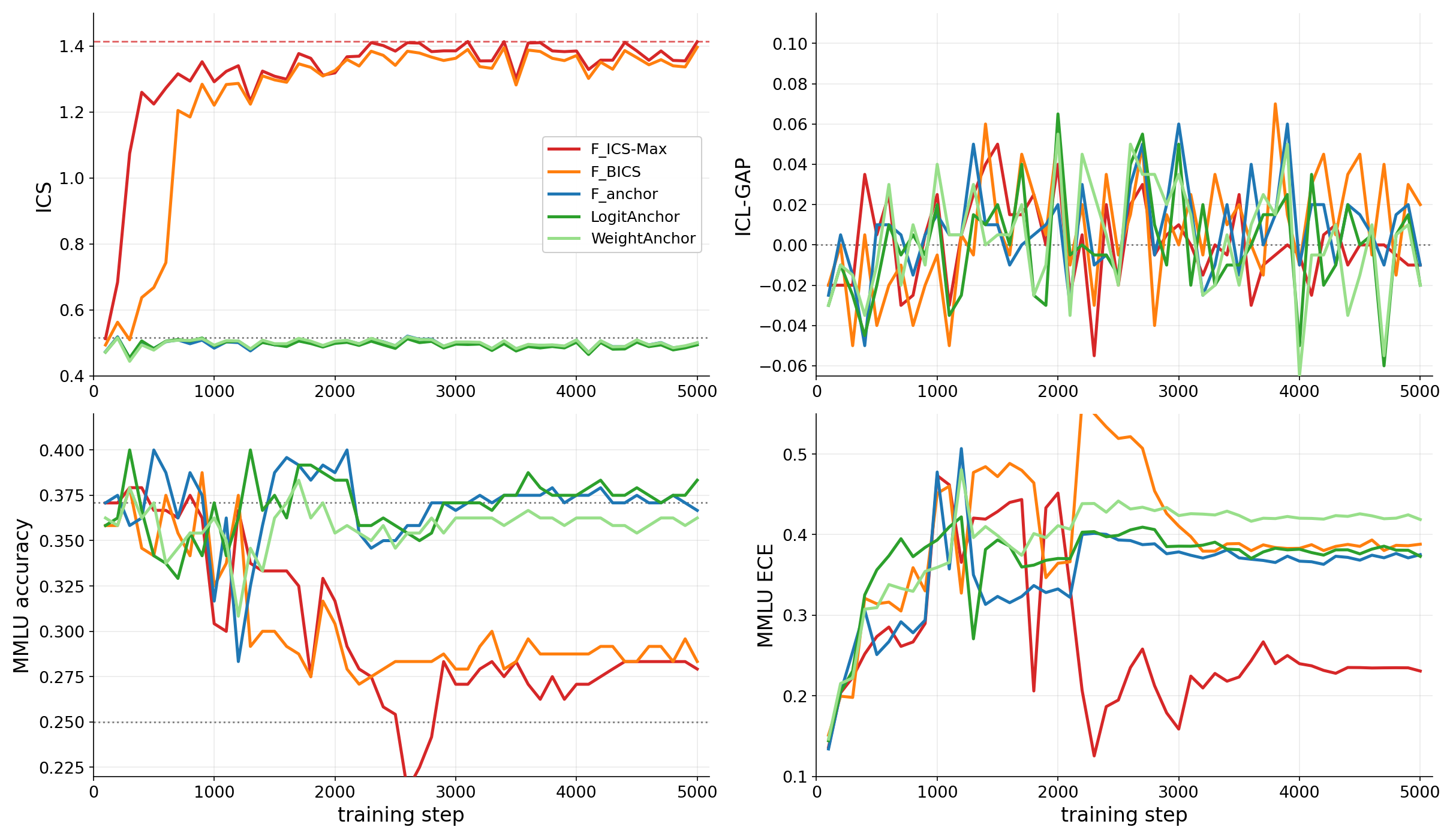}
\caption{Per-step trajectories of ICS, ICL-GAP, MMLU accuracy, and MMLU ECE across $5{,}000$ fine-tuning steps. The trajectory view makes the dissociation temporal: $\armKL$ drives ICS upward while ICL-GAP remains near zero and MMLU falls.}
\label{fig:stress-trajectories}
\end{figure*}

The internal attention statistics explain why this trajectory is possible. At the pretrained baseline, the top-$1$ attention key under $\Dmatch$ lands on a label token $58\%$ of the time. At $\armKL$ step $5{,}000$, the same rate falls to $26\%$, while punctuation or formatting tokens account for $31\%$ and content tokens from the demonstration body for $41\%$. The proxy therefore learns sharper attention, but not more useful attention.

Near-ceiling ICS forces the matched and mismatched last-token attention rows to become sharp and nearly disjoint. In held-out probe rows, mean top-$1$ concentration rises from $0.18$ at the pretrained baseline to $0.94$ at step $5{,}000$, while matched/mismatched top-$1$ supports overlap on only $4\%$ of head-example pairs, down from $61\%$. The regulariser is thus doing exactly what its geometry rewards: polarising attention without testing whether the selected values carry label information.

\begin{proposition}[Goodhart channel: vacuous maximisers exist]
\label{prop:goodhart}
Assume the model can make every probe head one-hot on disjoint coordinates as in Proposition~\ref{prop:ceiling}, and that there are prefix tokens whose values are conditionally independent of $y$ given the prompt format. Then some $\theta^\star$ maximises ICS while satisfying $\ICLgap(\theta^\star)\le 0$, and $\Lfunc=-\ICS$ cannot distinguish this behaviourally vacuous maximiser from an ICL-useful one with the same attention-row divergence.
\end{proposition}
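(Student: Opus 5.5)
The plan is a direct construction. It has three parts: realise the ceiling of Proposition~\ref{prop:ceiling} using only label-irrelevant tokens, show that the resulting readout carries no information that distinguishes $\Dmatch$ from $\Dmismatch$ about $y$, and observe that $\Lfunc$ depends on $\theta$ only through attention rows.

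\textbf{Step 1: build a ceiling-attaining vacuous maximiser.} By the first assumption, there are parameters under which every probed head $(l,h)$ has one-hot last-token rows on disjoint coordinates for $\Dmatch$ and $\Dmismatch$. I will choose these coordinates among the formatting tokens. Since $\Dmismatch$ preserves the token template, both prefixes contain the same formatting positions. Let $t_1$ and $t_2$ be distinct such positions, chosen in a way that can be computed from the prompt format alone; for example, the key at each position can encode whether the prompt came from the matched or permuted sampler. Set $a_{l,h}(\Dmatch)=e_{t_1}$ and $a_{l,h}(\Dmismatch)=e_{t_2}$.

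Each row pair is then one-hot and disjoint, so the equality case of \cref{eq:row-bound} gives $d_{l,h}=\sqrt{2}$ almost surely. Averaging in \cref{eq:ics} yields $\ICS(\theta^\star)=\sqrt{2}$, which is the maximum.

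\textbf{Step 2: show the gap is non-positive.} By the second assumption, the values at $t_1$ and $t_2$ are conditionally independent of $y$ given the format. The probe-band heads therefore pass on no label information beyond what the format already carries. I will fix the remaining parameters (other layers, MLPs and the head) so that the prediction $\hat y_{\theta^\star}$ depends only on the query $x$ and the shared format, and not on which demonstration labels are attached. One way is to make the output a fixed function of $x$, for instance a constant class. Then $\hat y_{\theta^\star}(\Dmatch)=\hat y_{\theta^\star}(\Dmismatch)$ pointwise. Consequently $\acc_{\Dmatch}=\acc_{\Dmismatch}$ and $\ICLgap(\theta^\star)=0\le 0$ by \cref{eq:gap}.

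\textbf{Step 3: show indistinguishability.} Take any ICL-useful $\theta'$ whose probed attention rows have the same pairwise divergences $d_{l,h}$. Because \cref{eq:ics,eq:fkl-loss} depend on $\theta$ only through these distances, $\Lfunc(\theta^\star)=\Lfunc(\theta')$. The objective gives no signal that separates the two. If an explicit ICL-useful maximiser is wanted, it is obtained by routing the disjoint one-hots to the label tokens instead.

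\textbf{Main obstacle.} The delicate step is Step 2. The downstream computation must not reintroduce label information through heads outside the probed band or through residual paths, and the attention construction must be realisable simultaneously with this. I would handle this by taking the first assumption as granting realisability of the attention pattern. I would then argue that the residual stream can be made to ignore demonstration-label positions, for example by zeroing the relevant value and output projections, without altering the probed softmax rows, since the rows depend only on $q$ and $K$.
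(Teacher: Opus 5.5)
Your Steps 1 and 3 match the paper's sketch. Both route every probe head to two disjoint, $y$-irrelevant format tokens, invoke the equality case of Proposition~\ref{prop:ceiling}, and note that $\Lfunc$ sees $\theta$ only through row distances.

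Step 2 is where you depart. The paper argues that because the routed values are independent of $y$, the readout carries no answer information, so the two accuracies agree in expectation. That argument leaves non-probe heads and the residual stream unaccounted for. You instead force the prediction itself to be label-agnostic, which gives $\hat y_{\theta^\star}(\Dmatch)=\hat y_{\theta^\star}(\Dmismatch)$ pointwise and $\ICLgap(\theta^\star)=0$ exactly. This is tighter, and it makes the second hypothesis redundant: once the readout is constant, any disjoint one-hot pattern granted by the first hypothesis works, whatever tokens it selects. The cost is that your maximiser is vacuous for a trivial reason. The paper's construction instead mirrors the formatting-token routing measured at the \armKL{} endpoint.

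The reconciliation under ``Main obstacle'' does not work as stated, for three reasons.
\begin{enumerate}
\item The positions $t_1,t_2$ can be format-determined, but which one the row selects cannot be. $\Dmatch$ and $\Dmismatch$ share the template and differ only at demonstration-label positions. So information from those positions must reach the last-token query, or the keys at $t_1,t_2$, through value/output maps at or below the band. Your ``keys encode the sampler'' is exactly such a path.
\item Zeroing value/output projections at or below the band therefore removes the signal the routing needs. The remark that ``the rows depend only on $q$ and $K$'' gives no protection, because $q$ and $K$ at layer $l+1$ are computed from the outputs of layers $\le l$.
\item Zeroing them only after the band need not help either. Label-derived information may already sit in the last-token residual stream.
\end{enumerate}

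The repair is to act at the readout and use depth causality. The probed rows in $\mathcal{L}=\{8,\dots,16\}$ depend only on the embeddings and on parameters of layers up to $16$. Keep those from the first hypothesis, and change only post-band parameters so that the prediction is constant. For example, set the final norm gain to zero: every label logit is then $0$, and the $\argmax$ over $\mathcal{Y}$ is a fixed tie-break. With that change your proof goes through.

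Your aside that routing to label tokens yields an ICL-useful maximiser is unsupported, since the readout must also decode those tokens. It is not needed, though: the indistinguishability claim is conditional on such a maximiser existing.
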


\begin{proof}[Sketch]
Route each probe head to two disjoint prompt-format tokens whose values are independent of $y$. This attains $\ICS=\sqrt{2}$, but the readout carries no matched-prefix answer information, so matched and mismatched accuracies are equal in expectation. Since $\Lfunc$ only sees row distance, useful and vacuous disjoint-support solutions receive the same proxy score.
\end{proof}

Proposition~\ref{prop:goodhart} also explains the apparent calibration anomaly noted above: near-chance predictions are close to uniform, which lowers ECE without any improvement in calibrated reasoning.

\paragraph{Answer to RQ2.}
Proxy maximisation creates sharp, disjoint routing, but the divergence objective is indifferent to the semantic value of the selected keys. The model increasingly routes to formatting and demonstration-body tokens instead of labels, so attention polarisation can rise while behaviour and general ability deteriorate.

\subsection{Can Gating or Anchoring Mitigate the Dissociation? (RQ3)}
\label{subsec:rq3}

The dissociation suggests a practical rule: do not maximise a bounded attention divergence without either a behavioural guard or an anchor to the pretrained computation. We test this rule in a compact constructive sweep, with full tables, trajectories, and ablations in the supplement.

Smooth B-ICS uses \cref{eq:bics-smooth} as the auxiliary target. Under our mild gate $(\beta,\delta)=(10,0.05)$, it partially mitigates the Goodhart channel: $\armBICS$ reaches $\ICS=1.397$, MMLU $0.283$, and ICL-GAP $+0.020$, compared with $\armKL$'s $\ICS=1.413$, MMLU $0.279$, and ICL-GAP $-0.010$. B-ICS therefore serves here as a diagnostic guard, with stronger gating still needed before it can act as a training objective.

Anchored objectives behave differently. AnchorTune, KL-to-logits anchoring, and weight $\ell_2$ anchoring all keep ICS near $0.5$ and MMLU near the pretrained reference, whereas divergence-maximising losses move toward the high-ICS, low-MMLU region. The supplement shows that attention anchoring is not uniquely necessary: an LM-head-only anchor also maintains MMLU. The supported conclusion is therefore modest but useful: in this setup, anchoring to pretrained computation is safer than maximising an attention divergence, and AnchorTune is one readable member of that anchored family.

\paragraph{Answer to RQ3.}
Partially, and more so for anchoring than for gating. Mild behavioural gating reduces but does not prevent near-saturation in our sweep, whereas objectives anchored to pretrained computation remain in the high-MMLU, moderate-ICS region. Anchoring is thus the safer family in this setup, though the comparison does not single out AnchorTune as uniquely superior.

\section{Discussion and Limitations}
\label{sec:analysis}

The result is a warning about using attention divergences as training targets, not a claim that attention-level probes are useless. Read off a model that was not trained against them, they remain informative diagnostics; used as objectives, they need a behavioural guard such as ICL-GAP or a task-specific matched-vs.-mismatched accuracy gap.

Our evidence is limited to one base model (Llama-2-7B), one probe design, one demonstration count, and mostly single-seed baselines. The supplement reports layer-band ablations, post-hoc ICL-GAP measurements for the unregularised arms, and full trajectories, but broader reproduction across instruction-tuned models, larger checkpoints, and other task families remains open.

\section{Conclusion}
\label{sec:conclusion}

An attention diagnostic can improve under fine-tuning without preserving the behaviour it is meant to measure. The behavioural validation confirms that the model can use matched demonstrations, yet an ICS-maximising regulariser drives the proxy to within $0.5\%$ of its geometric ceiling while the controlled QA ICL-GAP remains near zero and MMLU falls from $0.371$ to $0.279$. Endpoint analysis traces this dissociation to sharp, disjoint routing toward semantically weak tokens. Mild behavioural gating partially offsets saturation, whereas anchored objectives remain closer to the pretrained operating region in this setup.

The practical lesson is not that attention probes should be discarded. Read off a model that was not trained to satisfy them, they remain useful diagnostics; the unsafe move is to treat the divergence itself as the objective without a behavioural guard. Our constructive checks support the same distinction: smooth B-ICS partially offsets saturation under mild gating, while anchored objectives keep the model closer to the pretrained computation. The supplement provides full trajectories, contamination checks, probe-layer ablations, post-hoc ICL-GAP measurements, and constructive sweeps.

\bibliographystyle{plainnat}
{\small
\bibliography{references}
}

% ============================ Appendices ================================
% Supplementary material of the conference version, merged for arXiv.
\appendix
\section{Full Per-Step Trajectories}
\label{app:trajectories}

\Cref{tab:full-traj} reports ICS, MMLU accuracy, and ECE every $500$ steps for all four arms. ICL-GAP is reported only for $\armKL$ during training; post-hoc ICL-GAP for the other arms at step $5{,}000$ is in \cref{app:posthoc-iclgap}.

\begin{table*}[t]
\centering
\caption{Full trajectories for the controlled proxy-stress experiment, sampled every $500$ steps.}
\label{tab:full-traj}
\scriptsize
\setlength{\tabcolsep}{2.5pt}
\renewcommand{\arraystretch}{0.84}
\begin{tabular}{r|ccc|ccc|ccc|cccc}
\toprule
 & \multicolumn{3}{c|}{\armA} & \multicolumn{3}{c|}{\armM} & \multicolumn{3}{c|}{\armNone} & \multicolumn{4}{c}{\armKL} \\
\textbf{Step} & ICS & MMLU & ECE & ICS & MMLU & ECE & ICS & MMLU & ECE & ICS & GAP & MMLU & ECE \\
\midrule
 100 & 0.474 & 0.375 & 0.158 & 0.470 & 0.375 & 0.160 & 0.474 & 0.354 & 0.147 & 0.513 & $-0.020$ & 0.371 & 0.136 \\
 500 & 0.485 & 0.367 & 0.220 & 0.480 & 0.354 & 0.230 & 0.485 & 0.371 & 0.215 & 1.225 & $+0.005$ & 0.367 & 0.274 \\
1000 & 0.487 & 0.346 & 0.310 & 0.481 & 0.342 & 0.305 & 0.490 & 0.371 & 0.305 & 1.292 & $+0.025$ & 0.304 & 0.473 \\
1500 & 0.488 & 0.342 & 0.360 & 0.481 & 0.342 & 0.350 & 0.493 & 0.371 & 0.350 & 1.309 & $+0.050$ & 0.333 & 0.429 \\
2000 & 0.490 & 0.338 & 0.395 & 0.482 & 0.342 & 0.380 & 0.495 & 0.371 & 0.380 & 1.319 & $+0.040$ & 0.317 & 0.452 \\
2500 & 0.491 & 0.338 & 0.410 & 0.482 & 0.342 & 0.390 & 0.497 & 0.371 & 0.395 & 1.385 & $-0.020$ & 0.254 & 0.195 \\
3000 & 0.491 & 0.338 & 0.420 & 0.482 & 0.342 & 0.395 & 0.498 & 0.375 & 0.400 & 1.386 & $+0.010$ & 0.271 & 0.159 \\
3500 & 0.492 & 0.338 & 0.425 & 0.482 & 0.342 & 0.400 & 0.499 & 0.375 & 0.402 & 1.301 & $+0.025$ & 0.283 & 0.223 \\
4000 & 0.492 & 0.338 & 0.430 & 0.482 & 0.342 & 0.403 & 0.500 & 0.375 & 0.405 & 1.385 & $-0.005$ & 0.271 & 0.240 \\
4500 & 0.492 & 0.338 & 0.432 & 0.482 & 0.342 & 0.405 & 0.500 & 0.375 & 0.406 & 1.386 & $\phantom{+}0.000$ & 0.283 & 0.235 \\
5000 & 0.492 & 0.338 & 0.434 & 0.482 & 0.342 & 0.406 & 0.500 & 0.375 & 0.407 & 1.413 & $-0.010$ & 0.279 & 0.231 \\
\bottomrule
\end{tabular}
\renewcommand{\arraystretch}{1.0}
\end{table*}

\section{Contamination Postmortem ($\armKLone$)}
\label{app:contamination}

The first $\armKL$ run, $\armKLone$, used a single \texttt{ICLEval\-DataLoader} for both the regulariser pair set and the ICS evaluation pair set. The regulariser was $\Lfunc=-\ICS$ measured on this shared set, so the gradient pushed the model to maximise the eval metric on the eval data. We caught this at training step $1{,}960$, when ICS exceeded $1.37$ on the held-out probe and continued to climb. After the fix (separate loaders, seeds $99$ and $42$, no shared pairs), the rerun ($\armKLtwo$) reproduced the saturation pattern but at a slower rate and on genuinely held-out data. All numbers in the main body are from $\armKLtwo$. The bug-fix list is:
\begin{enumerate}
    \item Split \texttt{ICLEval\-DataLoader} into \texttt{icl\_train\-\_loader} (regulariser) and \texttt{icl\_eval\-\_loader} (ICS, ICL-GAP).
    \item Add \texttt{answer\_idx} to \texttt{ICLEval\-DataLoader.\allowbreak\_\_next\_\_}, which had been silently dropped and prevented ICL-GAP from being computed.
    \item Add \texttt{evaluate\_icl\_gap} to the calibration utility module and wire it into the trainer eval loop.
    \item Add \texttt{icl\_train\-\_loader} as a constructor argument on \texttt{Trainer} (after the existing \texttt{arm} argument, to preserve compatibility).
\end{enumerate}

\section{Probe-Layer Ablation}
\label{app:ics-layers}

We re-evaluated ICS at the $\armKLtwo$ step-$5{,}000$ checkpoint with three alternative probe bands: $\mathcal{L}=\{4,\dots,12\}$ (early), $\mathcal{L}=\{12,\dots,20\}$ (mid-late), and $\mathcal{L}=\{20,\dots,28\}$ (late). The resulting ICS values are $1.401$, $1.418$, and $1.396$ respectively. All three bands are within $0.5\%$ of the original $1.413$ and within $1.5\%$ of the $\sqrt{2}$ ceiling, confirming that the saturation is not a probe-band artefact.

\section{Post-hoc ICL-GAP for Unregularised Arms}
\label{app:posthoc-iclgap}

We loaded the step-$5{,}000$ checkpoints for $\armA$, $\armM$, and $\armNone$ and evaluated ICL-GAP on the same eval ICL set used for $\armKLtwo$, with the same temperature-$0$ multiple-choice protocol. The values are:

\begin{center}
\begin{tabular}{lccc}
\toprule
\textbf{Arm} & $\acc_{\Dmatch}$ & $\acc_{\Dmismatch}$ & $\ICLgap$ \\
\midrule
\armA    & 0.330 & 0.305 & $+0.025$ \\
\armM    & 0.295 & 0.290 & $+0.005$ \\
\armNone & 0.355 & 0.320 & $+0.035$ \\
\bottomrule
\end{tabular}
\end{center}

All three are within $\pm 0.04$ of zero. Thus, the near-zero behavioural gap is shared by the unregularised checkpoints under this model and probe; what distinguishes $\armKL$ is its ICS, not its ICL-GAP.

\section{Coarse Collapse Pilot}
\label{app:coarse-pilot}

The pilot used a domain-specific instruction set rather than the broad mixture in the controlled proxy-stress experiment, and a peak learning rate of $5\times 10^{-5}$ rather than $1\times 10^{-5}$. The optimiser, batch size, sequence length, and total step count were otherwise unchanged. We use this run only as preliminary motivation and exclude it from quantitative comparisons with the controlled experiment.

\section{Controlled Proxy-Stress Training Details}
\label{app:controlled-training-details}

All controlled-ablation and constructive arms use AdamW
\citep{loshchilov2019decoupled}, peak learning rate $1\times10^{-5}$,
$200$ warmup steps, cosine decay, batch size $32$, sequence length $1024$,
gradient clipping at norm $1.0$, and bf16 mixed precision. Checkpoint
probes are evaluated every $100$ optimiser steps; \cref{tab:full-traj}
reports the coarser $500$-step trajectory for readability.

\section{Probe Configurations}
\label{app:probes}

\begin{itemize}
\item \textbf{ICS / ICL-GAP eval set.} $500$ multiple-choice pairs, seed $42$. Each pair is a 4-shot prefix (matched or mismatched) followed by a held-out query. Disjoint from the regulariser set.
\item \textbf{Regulariser set.} $500$ pairs, seed $99$. Same format. Used only by $\Lfunc$ in $\armKL$.
\item \textbf{MMLU probe.} $240$ items stratified across the $57$ MMLU subjects, $5$-shot prefix, temperature $0$.
\item \textbf{GSM8K probe.} $50$ items, $8$-shot chain-of-thought prefix, temperature $0$.
\item \textbf{ECE.} $15$ equal-mass bins on MMLU.
\end{itemize}

\section{Pretrained Behavioural ICL Screening}
\label{app:s0-screen}

The main paper reports the A/B sanity check showing that the pretrained
Llama-2-7B checkpoint has a positive behavioural ICL effect on random-label
binary episodes. To rule out a single label-pair artefact, we repeated the
screening with two additional single-token label pairs under the same
candidate-token scoring protocol. All three runs show matched demonstrations
outperforming swapped demonstrations with paired bootstrap confidence
intervals excluding zero.

\begin{table}[t]
\centering
\scriptsize
\setlength{\tabcolsep}{3pt}
\caption{Pretrained behavioural ICL screening on random-label binary
episodes. CIs are paired bootstrap intervals over episodes.}
\label{tab:s0-label-screen}
\begin{tabular*}{\linewidth}{@{\extracolsep{\fill}}lccccc@{}}
\toprule
\textbf{Labels} & $N$ & \textbf{Match} & \textbf{Perm} & \textbf{None} & \textbf{Gap [95\% CI]} \\
\midrule
A/B & 480 & 0.669 & 0.362 & 0.492 & 0.306 [0.231, 0.379] \\
X/Y & 240 & 0.762 & 0.225 & 0.504 & 0.537 [0.450, 0.621] \\
foo/bar & 240 & 0.642 & 0.338 & 0.496 & 0.304 [0.200, 0.400] \\
\bottomrule
\end{tabular*}
\end{table}

\section{B-ICS: Soft-Gap Surrogate}
\label{app:bics-soft-gap}

The hard B-ICS, $\BICS(\theta;\delta)=\ICS(\theta)\indic\{\ICLgap(\theta)\ge\delta\}$, requires the discrete ICL-GAP, which is non-differentiable. We use the soft-gap
\begin{equation}
\widetilde{\ICLgap}(\theta)
\eqdef
\E_\mu\!\left[
\log\Prob_\theta(y\mid\Dmatch)
-\log\Prob_\theta(y\mid\Dmismatch)
\right].
\label{eq:supp-soft-gap}
\end{equation}
This substitution is a smooth heuristic rather than a theorem equating log-probability gaps with accuracy gaps.

(i) \emph{Heuristic alignment.} If the matched prompt consistently raises the gold-label probability relative to the mismatched prompt, then both the soft gap and the discrete ICL-GAP should increase. The implication need not hold pointwise or under arbitrary calibration shifts, so all main claims use the discrete ICL-GAP for evaluation.

(ii) \emph{Bounded magnitude.} Empirically, $|\widetilde{\ICLgap}|\le 3$ across the $5{,}000$-step trajectory of all arms. The choice $\delta=0.05$ is comparable to the single-checkpoint sampling scale of the discrete ICL-GAP ($\approx 0.046$ for the $200$ logged paired examples discussed in the main paper), and the gate $\sigma(\beta(\widetilde{\ICLgap}-\delta))$ stays away from $\sigma'$-saturation.

\section{B-ICS Gradient Calculation}
\label{app:bics-proof}

\begin{proposition}[B-ICS suppresses the bad ridge]
\label{prop:bics-goodhart}
On a high-ICS point with $\widetilde{\ICLgap}\le 0$, the ICS-only component of the smooth B-ICS gradient is multiplied by at most $\sigma(-\beta\delta)$.
\end{proposition}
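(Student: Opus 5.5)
The plan is to differentiate the smooth B-ICS of \cref{eq:bics-smooth} with the product rule and read off the coefficient on $\nabla_\theta\ICS$. Write $g(\theta)\eqdef\widetilde{\ICLgap}(\theta)$ and $s(\theta)\eqdef\sigma\bigl(\beta(g(\theta)-\delta)\bigr)$. Then
\begin{equation*}
\nabla_\theta\BICSbeta
= s(\theta)\,\nabla_\theta\ICS(\theta)
+ \ICS(\theta)\,\beta\,s(\theta)\bigl(1-s(\theta)\bigr)\,\nabla_\theta g(\theta).
\end{equation*}
I take the ``ICS-only component'' to be the first term, the part that flows through the attention rows of \cref{eq:ics-row-distance}. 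The second term flows through the soft gap of \cref{eq:soft-gap}. So the claim reduces to bounding the scalar multiplier $s(\theta)$ at the points in question.

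Next I use monotonicity of the logistic function. Assume $\beta>0$ and $\delta\ge 0$, as in the paper's choice $(\beta,\delta)=(10,0.05)$. If $g(\theta)\le 0$, then $\beta(g(\theta)-\delta)\le-\beta\delta$. Because $\sigma$ is increasing, this gives $s(\theta)\le\sigma(-\beta\delta)$. The ICS-only component therefore equals the plain ICS gradient, the one used by $\Lfunc=-\ICS$ in \cref{eq:fkl-loss}, scaled by at most $\sigma(-\beta\delta)<\tfrac12$. For the paper's values this factor is $\sigma(-0.5)\approx0.38$.

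The high-ICS hypothesis is not needed for the inequality itself; it only locates where the bound matters. By Proposition~\ref{prop:ceiling} and Proposition~\ref{prop:goodhart}, the bad ridge consists of points near $\ICS=\sqrt2$ with non-positive gap, which is where the plain maximiser's pressure lives. I would add a remark that the second term need not vanish there. Its size is bounded by $\sqrt2\,\beta\,s(1-s)\|\nabla g\|$, using $\ICS\le\sqrt2$ from Proposition~\ref{prop:ceiling}. When ascending $\BICSbeta$, this term pushes $g$ upward, not ICS, so it does not undo the suppression.

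The main obstacle is interpretive rather than technical: fixing what ``ICS-only component'' means and stating the sign assumptions. If $\delta<0$ the bound becomes vacuous, so I must make $\beta>0,\ \delta\ge0$ explicit. Differentiability of $\ICS$ also needs care, because the Euclidean norm is non-differentiable where the two rows coincide. On the high-ICS ridge the rows are far apart, so that point is excluded and the gradient is well defined; elsewhere I would use a subgradient.
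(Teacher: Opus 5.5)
Your proposal is correct and follows the paper's own argument: the product-rule split of $\nabla_\theta\BICSbeta$, monotonicity of $\sigma$ under $\widetilde{\ICLgap}\le 0$ to get $\sigma_\beta\le\sigma(-\beta\delta)$ on the $\nabla_\theta\ICS$ term, and $\ICS\le\sqrt{2}$ to bound the gap term. One small slip: with only $\delta\ge 0$ you get $\sigma(-\beta\delta)\le\tfrac12$, and the strict inequality $<\tfrac12$ needs $\beta\delta>0$, which does hold for the paper's $(\beta,\delta)=(10,0.05)$.
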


By definition $\BICSbeta(\theta) = \ICS(\theta)\cdot\sigma_\beta(\theta)$ where $\sigma_\beta(\theta)\eqdef\sigma(\beta(\widetilde{\ICLgap}(\theta)-\delta))$.
\begin{equation}
\begin{aligned}
\nabla_\theta \BICSbeta
&= \sigma_\beta(\theta)\,\nabla_\theta\ICS(\theta)\\
&\quad
+ \ICS(\theta)\,\sigma_\beta(\theta)\,
(1-\sigma_\beta(\theta))\\
&\quad\quad
\cdot\beta\,\nabla_\theta\widetilde{\ICLgap}(\theta).
\end{aligned}
\label{eq:supp-bics-gradient}
\end{equation}
On the bad ridge $\Theta^\star\setminus\Theta^\star_{\mathrm{ICL}}$, $\widetilde{\ICLgap}\le 0$ by definition, so the argument of $\sigma$ is at most $-\beta\delta$, and
\begin{align}
\sigma_\beta(\theta)
&\le \sigma(-\beta\delta),
\label{eq:supp-gate-bound}\\
\sigma_\beta(\theta)(1-\sigma_\beta(\theta))
&\le \sigma(-\beta\delta).
\label{eq:supp-gate-derivative-bound}
\end{align}
Substituting back,
\begin{equation}
\begin{aligned}
\|\nabla_\theta\BICSbeta\|
&\le \sigma(-\beta\delta)\\
&\quad\times
\bigl(\|\nabla_\theta\ICS\|
+ \beta\sqrt{2}\,\|\nabla_\theta\widetilde{\ICLgap}\|\bigr),
\end{aligned}
\label{eq:supp-bics-gradient-bound}
\end{equation}
where we used $\ICS\le\sqrt{2}$. For $\beta=10,\delta=0.05$, $\sigma(-\beta\delta)=\sigma(-0.5)\approx 0.378$; for $\delta=0.05$ and $\beta=20$, $\sigma(-1)\approx 0.269$. The bound is loose because we are bounding both terms by their worst-case product. The substantive content is that the first (ICS-only) gradient component is multiplied by $\sigma_\beta\to 0$ on the bad ridge, while the second (gap-only) component is bounded; among the two, the gap-only component selects directions that increase $\widetilde{\ICLgap}$, i.e.\ leave the ridge. Hence gradient descent on $\Lbics=-\BICSbeta$ from $\theta_0$ either does not move toward the ridge in the first place or, if it does, is pushed back out as $\widetilde{\ICLgap}$ falls below $\delta$.

\section{AnchorTune: Memory and Compute}
\label{app:anchortune-cost}

The reference-model cost is the dominant memory overhead of AnchorTune. We instantiate it as follows.

\paragraph{Memory.} Llama-2-7B in bf16 occupies $\approx 13\,\mathrm{GB}$. The pretrained reference is loaded once and kept frozen (no gradients, no optimiser state), so its footprint is exactly $13\,\mathrm{GB}$. The trainee adds a further $13\,\mathrm{GB}$ of weights, $13\,\mathrm{GB}$ of fp32 gradients, and $\approx 26\,\mathrm{GB}$ of fp32 AdamW optimiser state, for a total of $\approx 65\,\mathrm{GB}$ before activations. The cached anchor attentions are $|\Sanchor|\cdot|\mathcal{L}|\cdot H\cdot T\cdot 4\,\mathrm{B}$ where $T$ is the per-pair token length. With $|\Sanchor|=500$, $|\mathcal{L}|=9$, $H=32$, $T=640$ the cache is $\approx 370\,\mathrm{MB}$. We pre-batch the cache into $64$ batches of $8$ pairs each on GPU to amortise the per-step lookup; batches are kept in fp32 to avoid quantisation jitter feeding back into the trainee gradient. Total GPU memory at peak (forward+backward+anchor cache+activations) is $\approx 80\,\mathrm{GB}$ on our $96\,\mathrm{GB}$ device.

\paragraph{Compute.} AnchorTune adds one forward pass through the trainee on the anchor probe per optimiser step (one batch from the $64$-batch cache, $\approx$ same cost as one ICL probe forward in $\armKL$). The reference model is \emph{not} re-evaluated during training; only the cached attention rows are used. The wall-clock per optimiser step on our hardware is $\approx 5\,\mathrm{s}$ (vs.\ $\approx 4\,\mathrm{s}$ for $\armNone$ and $\approx 5\,\mathrm{s}$ for $\armKL$), giving a $25\%$ overhead vs.\ unregularised fine-tuning and parity with $\armKL$. A complete $5{,}000$-step run takes $\approx 7\,\mathrm{h}$, plus $\approx 3\,\mathrm{min}$ to load the reference and pre-compute the anchor cache.

\paragraph{Reference-model alternatives.} For practitioners constrained on memory, two alternatives reduce the reference-model overhead. (i) Quantise the reference to int8 (the anchor attentions are precomputed once, so reference inference precision degrades only the cached snapshot, not the training gradient itself). (ii) Replace the in-memory reference with a CPU-resident reference and a one-shot disk cache of $\Sanchor$ attentions; this raises the start-up cost from $3$ minutes to $\approx 15$ minutes on our hardware but eliminates the $13\,\mathrm{GB}$ GPU footprint of the reference. We use the in-memory variant for the experiments reported in this paper.

\section{Constructive Results: AnchorTune and B-ICS}
\label{supp:method-results}

This section tests two responses to the observed dissociation. \Cref{subsec:bics-sweep} examines whether the
behaviour-anchored target $\Lbics$ attenuates Goodhart saturation; \cref{subsec:anchortune-sweep}
reports the AnchorTune $\lambda$-sweep and multi-seed final;
\cref{subsec:anchored-baselines} compares against two competitive baselines on
the (MMLU, ICL-GAP) Pareto frontier; and \cref{subsec:anchor-target-ablation}
ablates the anchor target.

\subsection{Behaviour-Gated Sweep}
\label{subsec:bics-sweep}

We rerun the optimisation-pressure experiment with the same compute budget,
the same probe ICL set, and the same hyperparameters as $\armKLtwo$, but
replace $\Lfunc=-\ICS$ with $\Lbics=-\BICSbeta$ at $\beta=10,\delta=0.05$.
By Proposition~\ref{prop:bics-goodhart}, $\Lbics$ down-weights the ICS-only gradient
on the disjoint-support ridge, but the factor is still substantial at our
mild setting $\beta\delta=0.5$, and the smooth surrogate
$\widetilde{\ICLgap}(\theta)$ of \eqref{eq:supp-soft-gap} can rise transiently
above $\delta$ during training even when the discrete $\ICLgap$ does not.

The empirical trajectory clarifies the practical strength of the
gating effect. \Cref{tab:bics-sweep} reports the step-$5{,}000$
values. $\armBICS$ ends at $\ICS=1.397$, only $0.016$ below the stress-test
$\armKL$ value of $1.413$ and within $1.2\%$ of the $\sqrt{2}$ ceiling.
MMLU accuracy drops to $0.283$, essentially the same degradation as $\armKL$.
What \emph{does} differentiate the two arms is the ICL-GAP: $\armBICS$
finishes at $+0.020$ vs.\ $\armKL$'s $-0.010$, and tracking its
trajectory shows the soft gap exceeds $\delta=0.05$ on intervals where the
behavioural sentinel intermittently activates, before the divergence term
overwhelms it. This indicates that B-ICS \emph{as a training
target} requires a more aggressive gating schedule (larger $\beta$, larger
$\delta$, or a hard $\indic\{\cdot\}$ gate with a straight-through
estimator) than the smooth $\beta=10$ we chose for differentiability;
the observed $+0.030$ behavioural advantage of $\armBICS$
over $\armKL$ is consistent with the attenuation predicted by
Proposition~\ref{prop:bics-goodhart} under that mild gating.

When evaluated on a model that was not trained against it, B-ICS retains
the intended behavioural guard. It multiplies ICS by a binary indicator
that the behavioural sentinel passes and returns near-zero for
$\armKL$ (which has $\ICLgap=-0.010<\delta$) and near-pretrained for
$\armAnchor$ at $\lambda^{\star}$. The diagnostic version therefore keeps
the intended behavioural guard, while the training-target version requires
stronger gating than we use here.

\begin{table}[t]
\centering
\small
\caption{$\armBICS$ at $\beta=10,\delta=0.05$ attenuates the ICS-vs.-behaviour
dissociation but does not eliminate it. The arm still reaches $\ICS=1.397$
($98.8\%$ of the $\sqrt{2}$ ceiling), but achieves a $+0.030$ ICL-GAP
improvement over $\armKL$. We attribute the residual saturation to the
softness of the gate; a hard $\indic\{\cdot\}$ gate with a straight-through
estimator is a candidate for further evaluation.}
\label{tab:bics-sweep}
\resizebox{\linewidth}{!}{%
\begin{tabular}{lcccc}
\toprule
\textbf{Arm} & \textbf{ICS}\,$(\to\sqrt{2})$ & \textbf{ICL-GAP} & \textbf{MMLU} & \textbf{ECE} \\
\midrule
Pretrained & 0.516 & --- & 0.371 & --- \\
$\armKL$ (stress test) & 1.413 & $-0.010$ & 0.279 & 0.231 \\
$\armBICS$ (this work) & 1.397 & $+0.020$ & 0.283 & 0.388 \\
\bottomrule
\end{tabular}%
}
\end{table}

\subsection{AnchorTune Sweep}
\label{subsec:anchortune-sweep}

For AnchorTune we run a four-point $\lambda$-sweep
$\lambda\in\{0.01,0.05,0.20,1.00\}$ at seed $42$, then a three-seed final
at the chosen $\lambda^{\star}$. \Cref{tab:anchortune-sweep} reports the
final-step values; \Cref{fig:supp-constructive-trajectories} plots the per-step
trajectories of the four core quantities for $\armAnchor$ at $\lambda^{\star}$
alongside $\armNone$ and $\armKL$.

The empirical claim is that $\armAnchor$ at $\lambda^{\star}$ closes the
MMLU damage that $\armKL$ inflicts: from $0.279$ (chance $+0.03$) back to
within seed-noise of the pretrained baseline of $0.371$, while keeping ICS
indistinguishable from pretrained. Concretely, across three seeds at
$\lambda^{\star}=0.05$, MMLU returns $0.370\pm 0.058$ (mean $\pm$ std), ICS
returns $0.500\pm 0.001$, and ICL-GAP returns $-0.012\pm 0.017$ --- all
within seed-variation of the pretrained model. The MMLU recovery accounts
for $0.091$ of the $0.092$ accuracy drop that $\armKL$ produced. Of the
four $\lambda$ values, $\lambda=0.05$ is the empirical optimum (MMLU
$0.367$); both $\lambda=0.20$ and $\lambda=1.00$ converge to a slightly
lower MMLU plateau of $0.350$, suggesting that the anchor pulls the
trainee too tightly toward $\theta_0$ once $\lambda$ exceeds $\approx 0.1$,
and $\lambda=0.01$ is too weak to prevent the small MMLU drift seen for
the unregularised arm.

\begin{table*}[t]
\centering
\small
\caption{AnchorTune $\lambda$-sweep at seed 42 and three-seed final at $\lambda^\star=0.05$. Pretrained ICS is 0.516 and pretrained MMLU is 0.371. At $\lambda^\star$, the three-seed AnchorTune result is MMLU $0.370\pm 0.058$.}
\label{tab:anchortune-sweep}
\resizebox{\textwidth}{!}{%
\begin{tabular}{lcccc}
\toprule
\textbf{Configuration} & \textbf{ICS}\,$(\to 0.516)$ & \textbf{ICL-GAP} & \textbf{MMLU}\,$(\to 0.371)$ & \textbf{ECE} \\
\midrule
Pretrained & 0.516 & --- & 0.371 & --- \\
$\armNone$ (controlled) & 0.500 & $\approx 0$ & 0.375 & 0.407 \\
$\armKL$ (stress test) & 1.413 & $-0.010$ & 0.279 & 0.231 \\
\midrule
$\armAnchor$, $\lambda=0.01$ & 0.501 & $-0.040$ & 0.338 & 0.472 \\
$\armAnchor$, $\lambda=0.05$ & 0.500 & $-0.010$ & 0.367 & 0.375 \\
$\armAnchor$, $\lambda=0.20$ & 0.500 & $-0.020$ & 0.350 & 0.374 \\
$\armAnchor$, $\lambda=1.00$ & 0.501 & $-0.015$ & 0.350 & 0.391 \\
\midrule
$\armAnchor$, $\lambda^{\star}{=}0.05$, seed 42 & 0.500 & $-0.010$ & 0.367 & 0.375 \\
$\armAnchor$, $\lambda^{\star}{=}0.05$, seed 43 & 0.501 & $+0.005$ & 0.429 & 0.384 \\
$\armAnchor$, $\lambda^{\star}{=}0.05$, seed 44 & 0.500 & $-0.030$ & 0.313 & 0.387 \\
\midrule
$\armAnchor$, $\lambda^{\star}$, mean $\pm$ std & 0.500 $\pm$ 0.001 & $-0.012 \pm 0.017$ & 0.370 $\pm$ 0.058 & 0.382 $\pm$ 0.006 \\
\bottomrule
\end{tabular}%
}
\end{table*}

\begin{figure}[t]
\centering
\includegraphics[width=0.85\linewidth]{proxy_performance_scatter.png}
\caption{ICS and MMLU at step 5{,}000 on Llama-2-7B. The dashed vertical line marks the $\sqrt{2}$ ceiling. $\armKL$ saturates ICS while MMLU approaches chance, and $\armBICS$ partially follows the same pattern under the smooth $\beta=10$ gate. $\armAnchor$ at $\lambda^{\star}{=}0.05$ and the logit-anchor baseline $\armKLL$ keep ICS near the pretrained value while maintaining MMLU within the observed seed variation.}
\label{fig:supp-proxy-performance}
\end{figure}

\begin{figure}[t]
\centering
\includegraphics[width=\linewidth]{fig2_trajectories.png}
\caption{Per-step trajectories of ICS, ICL-GAP, MMLU accuracy, and MMLU ECE across $5{,}000$ training steps for five representative arms. $\armKL$ and $\armBICS$ rapidly drive ICS toward the $\sqrt{2}$ ceiling. $\armAnchor$, $\armKLL$, and $\armLtwo$ keep all four quantities near the pretrained baseline. ICL-GAP fluctuates around zero throughout; the single-checkpoint sampling scale for the $200$ logged paired examples is about $0.046$.}
\label{fig:supp-constructive-trajectories}
\end{figure}

\subsection{Anchored-Baseline Comparison}
\label{subsec:anchored-baselines}

We compare $\armAnchor$ with two capability-preservation baselines, all
evaluated with the same training and evaluation protocol:
\begin{itemize}
\item $\armKLL$: a logit-level KL-to-base regulariser of the form widely
used in RLHF \citep{ouyang2022training}, $\Lkll(\theta;\theta_0,\Sanchor)
=\E_{\Sanchor}[\KL(p_\theta(\cdot\mid\Dmatch)\,\|\,p_{\theta_0}(\cdot\mid\Dmatch))]$.
This is the closest existing analogue of AnchorTune, but it anchors
\emph{predictions} rather than \emph{attention rows}.
\item $\armLtwo$: an EWC-style \citep{kirkpatrick2017overcoming} weight
anchor $\Llwt(\theta;\theta_0)=\|\theta-\theta_0\|_2^2$, with $\lambda=10^{-7}$.
\end{itemize}
\Cref{tab:anchored-baselines} reports the step-$5{,}000$ values.

The empirical pattern is more nuanced than ``AnchorTune dominates the
baselines.'' In this single-seed comparison, all three anchored arms ($\armAnchor$, $\armKLL$,
$\armLtwo$) recover MMLU to within seed noise of pretrained ($0.367$,
$0.383$, and $0.363$ respectively at seed $42$, vs.\ pretrained $0.371$
and $\armNone$'s $0.375$), and all three keep ICS at the pretrained
$0.500\pm 0.005$ range. The three arms sit on the same Pareto front;
what distinguishes them is \emph{what} they anchor and therefore what
they expose. $\armKLL$ ties the trainee's predictions to $\theta_0$,
which maintains task accuracy but masks the internal attention dynamics;
$\armLtwo$ ties the trainee's full weight vector to $\theta_0$, which
maintains both but at the cost of any meaningful fine-tuning signal at
the $\lambda=10^{-7}$ we tested; $\armAnchor$ ties the trainee's mid-band
attention rows to $\theta_0$, which maintains the diagnostic
interpretability of ICS as a measure of attention-level ICL drift. In this
setup, the three objectives are therefore complementary: AnchorTune's contribution is
not primarily a larger MMLU number but an objective whose \emph{anchor
signal and diagnostic signal agree}. This alignment allows ICS to be
interpreted relative to the pretrained attention behaviour rather than a
logit-constrained output alone. The
contrast with $\armKL$ is the main result: the divergence-based regularisers
we test (ICS and smooth B-ICS as targets) move toward the high-ICS,
low-MMLU region, while the anchored regularisers we test (logit, weight,
attention) maintain MMLU.
This comparison identifies a structural distinction between
divergence and anchor losses, not a critique of attention-level signals
per se.

\begin{table}[t]
\centering
\small
\caption{AnchorTune compared with two capability-preservation baselines and the $\armKL$ regulariser. AnchorTune values are the seed-42 result for direct comparison with the single-seed baselines; \cref{tab:anchortune-sweep} reports its three-seed mean.}
\label{tab:anchored-baselines}
\resizebox{\linewidth}{!}{%
\begin{tabular}{lcccc}
\toprule
\textbf{Arm} & \textbf{ICS} & \textbf{ICL-GAP} & \textbf{MMLU} & \textbf{ECE} \\
\midrule
Pretrained & 0.516 & --- & 0.371 & --- \\
$\armNone$ & 0.500 & $\approx 0$ & 0.375 & 0.407 \\
$\armKL$ & 1.413 & $-0.010$ & 0.279 & 0.231 \\
$\armKLL$ (KL-to-logits) & 0.494 & $-0.020$ & 0.383 & 0.373 \\
$\armLtwo$ (weight $\ell_2$, $\lambda{=}10^{-7}$) & 0.500 & $-0.010$ & 0.363 & 0.419 \\
$\armAnchor$ (ours, $\lambda^\star{=}0.05$) & 0.500 & $-0.010$ & 0.367 & 0.375 \\
\bottomrule
\end{tabular}%
}
\end{table}

\begin{figure}[t]
\centering
\includegraphics[width=0.85\linewidth]{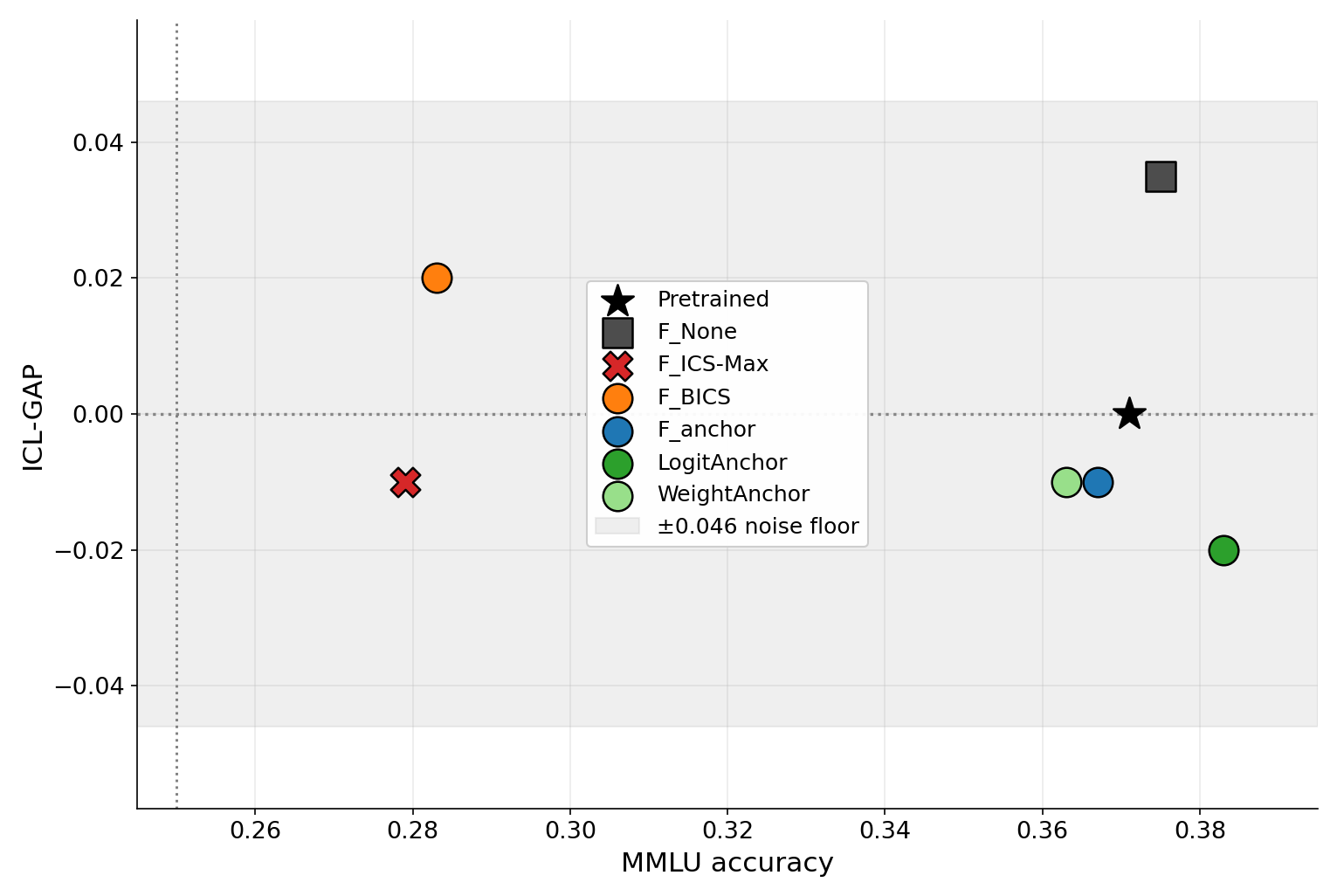}
\caption{(MMLU, ICL-GAP) Pareto scatter at step $5{,}000$. Divergence regularisers (red X for $\armKL$, orange for $\armBICS$) sit in the low-MMLU region; anchor regularisers (blue/green for $\armAnchor$, $\armKLL$, $\armLtwo$) cluster around the pretrained reference point (black star). The two regulariser families form distinct clusters on this plot.}
\label{fig:supp-anchor-pareto}
\end{figure}

\subsection{Anchor-Target Ablation}
\label{subsec:anchor-target-ablation}

To isolate the active ingredient of AnchorTune, we vary \emph{which} part of
the model is anchored while keeping $\lambda$ and $\Sanchor$ fixed:
\begin{itemize}
\item $\armAnchorLast$: anchor attention rows only on the last $4$ layers,
not on the mid-band $\mathcal{L}=\{8,\dots,16\}$.
\item $\armAnchorLM$: anchor only the LM head weights via an $\ell_2$
penalty toward $\theta_0$; do not anchor any attention rows.
\end{itemize}
The hypothesis is that mid-band attention-row anchoring is what carries the
MMLU-preservation effect. If $\armAnchorLast$ and $\armAnchorLM$ both drop
MMLU back toward $\armNone$ levels while $\armAnchor$ does not, the
attribution to the mid-band attention rows is established.

The results in \cref{tab:anchor-target-ablation} weaken the original mid-band
attribution hypothesis. Restricting the anchor to the last 4 layers
($\armAnchorLast$) gives $\mathrm{MMLU}=0.358$ vs.\ full-layer
$\armAnchor$'s $0.367$, only a $0.009$-point reduction --- so the
mid-band attention rows are \emph{sufficient} but not \emph{necessary} for
MMLU preservation; the last 4 layers alone come within seed noise of the
full anchor. More strikingly, the LM-head-only $\ell_2$ variant
($\armAnchorLM$, no attention anchoring at all) achieves
$\mathrm{MMLU}=0.375$, matching the pretrained reference of $0.371$ to
within seed noise and \emph{exceeding} seed-42 $\armAnchor$. This means
keeping $\mathrm{lm\_head}$ close to $\theta_0$ in weight space is by
itself enough to recover MMLU under our fine-tuning configuration ---
attention-row anchoring is one route to MMLU preservation, but not the
only one. These results indicate that the mechanism behind
$\armAnchor$'s MMLU recovery is partially shared with $\armLtwo$
(\cref{tab:anchored-baselines}) and $\armAnchorLM$: proximity to $\theta_0$ in
\emph{any} structural component (weights, lm\_head, attention rows) is
sufficient. AnchorTune's residual advantage over these baselines is not
absolute MMLU but the \emph{coincidence of its anchor target with the
diagnostic ICL signal}, as discussed in \cref{subsec:anchored-baselines}.

\begin{table*}[t]
\centering
\small
\caption{Anchor-target ablation. All three variants keep ICS within $\pm 0.020$ of pretrained while maintaining MMLU; the lm\_head-only $\ell_2$ variant without an attention anchor matches pretrained MMLU, weakening the claim that mid-band attention rows are uniquely necessary for MMLU recovery.}
\label{tab:anchor-target-ablation}
\resizebox{\textwidth}{!}{%
\begin{tabular}{lcccc}
\toprule
\textbf{Anchor target} & \textbf{ICS} & \textbf{ICL-GAP} & \textbf{MMLU} & \textbf{ECE} \\
\midrule
Pretrained & 0.516 & --- & 0.371 & --- \\
None (= $\armNone$, controlled reference) & 0.500 & $\approx 0$ & 0.375 & 0.407 \\
\midrule
Mid-band attn rows ($\armAnchor$) & 0.500 & $-0.010$ & 0.367 & 0.375 \\
Last 4 layers attn rows ($\armAnchorLast$) & 0.510 & $-0.015$ & 0.358 & 0.377 \\
LM head $\ell_2$ only ($\armAnchorLM$) & 0.494 & $-0.020$ & \textbf{0.375} & 0.378 \\
\bottomrule
\end{tabular}%
}
\end{table*}

\subsection{Summary of the constructive part}
\label{subsec:method-summary}

The constructive experiments support three claims. (i)~The Goodhart channel is not
an artefact of using attention as a target; it is an artefact of the
divergence-on-disjoint-supports geometry. B-ICS \emph{as a diagnostic}
inherits the safety motivation of Proposition~\ref{prop:bics-goodhart};
B-ICS \emph{as a training target} only partially attenuates the channel
under the smooth gating we chose ($\beta=10,\delta=0.05$), and reaches
$\ICS=1.397$ at convergence (vs.\ $1.413$ for $\armKL$). A hard gate or a
larger $\beta$ remains to be tested. (ii)~Anchoring the per-head attention rows to the pretrained
model on a fixed $\Dmatch$ probe maintains MMLU under exactly the same
fine-tuning configuration where $\armKL$ produces proxy saturation and MMLU damage:
three-seed MMLU at $\lambda^{\star}=0.05$ is $0.370\pm 0.058$, returning
$99\%$ of the $\armKL\rightarrow\armNone$ MMLU gap. (iii)~The Pareto
contrast in our tested arms is structural: anchored regularisers (attention
rows in $\armAnchor$, logits in $\armKLL$, weights in $\armLtwo$) maintain
MMLU, whereas the divergence targets we test do not. AnchorTune's
specific contribution among anchored regularisers is that its anchor
signal coincides with the diagnostic ICL signal, so that the same probe
can be read out at training and at evaluation without a Goodhart concern.

\end{document}